\documentclass{article} 
\usepackage{paper,times}

\usepackage{amsmath,amsfonts,bm}

\def\eqref#1{equation~\ref{#1}}

\def\1{\bm{1}}

\DeclareMathAlphabet{\mathsfit}{\encodingdefault}{\sfdefault}{m}{sl}
\SetMathAlphabet{\mathsfit}{bold}{\encodingdefault}{\sfdefault}{bx}{n}

\usepackage{amssymb}
\usepackage{amsthm}
\usepackage{float}
\usepackage{hyperref}
\hypersetup{hidelinks}
\usepackage{url}
\usepackage{graphicx}
\usepackage{wrapfig}
\usepackage{booktabs}
\usepackage{colortbl}
\usepackage{tcolorbox}
\tcbuselibrary{breakable}
\definecolor{cmcdrgreen}{RGB}{232,246,238}
\definecolor{cmcdracc}{RGB}{150,45,54}
\definecolor{cmcdrforget}{RGB}{23,118,111}
\definecolor{propboxbg}{RGB}{244,249,246}
\definecolor{propboxrule}{RGB}{145,171,157}
\newtcolorbox{restatedproposition}{
  breakable,
  colback=propboxbg,
  colframe=propboxrule,
  boxrule=0.45pt,
  arc=1.5pt,
  left=6pt,
  right=6pt,
  top=5pt,
  bottom=5pt,
  before skip=7pt,
  after skip=9pt
}
\newcommand{\accdelta}[1]{\textcolor{cmcdracc}{\textbf{(+#1)}}}
\newcommand{\downdelta}[1]{\textcolor{cmcdrforget}{\textbf{(-#1)}}}
\newtheorem{proposition}{Proposition}
\newtheorem{lemma}[proposition]{Lemma}
\newtheorem{corollary}[proposition]{Corollary}

\title{Regularizing Modality Contribution Drift in Multimodal Continual Learning}

\author{
\small
Zhen Zhang\textsuperscript{1} \quad
Jielei Chu\textsuperscript{1}\thanks{E-mail addresses: Zhen Zhang: \texttt{zhenzhang@my.swjtu.edu.cn}; Jielei Chu: \texttt{jieleichu@swjtu.edu.cn}; Bin Liu: \texttt{binliu@swjtu.edu.cn}; Tianrui Li: \texttt{trli@swjtu.edu.cn}.} \quad
Bin Liu\textsuperscript{1} \quad
Tianrui Li\textsuperscript{1} \\
\textsuperscript{1}School of Computing and Artificial Intelligence, Southwest Jiaotong University \\
}

\begin{document}

\maketitle

\begin{abstract}
Multimodal continual learning (MMCL) aims to learn emerging knowledge from
multimodal data while preserving knowledge. To mitigate forgetting, current MMCL methods usually focus 
on cross-modal representation alignment or semantic similarity,
but they overlook whether the relative contributions of individual modalities
and their interactions remain stable across incremental tasks. 
We term this decision-level shift Modality Contribution Drift (MCD) and
quantify it with the MCD score, which combines contribution-strength and
relative-reliance changes under controlled interventions on modality subsets.
Theoretical and empirical analyses further explain why
current MMCL methods cannot reliably mitigate this drift. 
To this end, we propose Continual Modality Contribution Drift
Regularization (CMCDR), which preserves the modality contribution structure of
previously learned tasks. Since MMCL settings differ in whether old exemplars
are available, CMCDR includes both replay-based and replay-free versions. The
replay-based version uses modality-subset interventions as diagnostic probes
on stored old samples, compares their contribution profiles between the current
model and a frozen previous model, and constrains changes in old-sample
modality-specific and interaction contributions. The replay-free version uses
current-task samples as probes and distills the frozen model's old-task
contribution responses, thereby regularizing the observed contribution profile
without exemplars.
Experiments on multimodal class-incremental learning and continual
visual question answering validate the generality and effectiveness of CMCDR.
\end{abstract}

\section{Introduction}

Multimodal continual learning (MMCL) studies how models learn sequentially from
dynamic multimodal data. In this setting, mitigating forgetting requires
preserving previously learned task knowledge while maintaining cross-modal
relationships. Many existing methods mitigate forgetting by preserving
multimodal representations and their cross-modal geometry. For example,
AV-CIL~\citep{pian2023avcil} performs audio-visual representation alignment by
preserving instance- and class-aware semantic similarity, while
MG-CLIP~\citep{huang2025mindgap} preserves the image--text modality gap across
incremental stages. These methods reduce representational drift but do not
directly constrain how modalities and their interactions support individual
decisions.

However, incremental tasks often require distinct modality dependencies, 
so learning a new task can alter not only the representation correspondences 
between modalities but also \textbf{the relative contribution of each modality to 
decisions on previously learned tasks.} Several intuitive examples illustrate this task-dependent nature
of modality reliance. As shown in Fig.~\ref{motivation} (a), the model relies predominantly 
on visual cues in the road-crossing task, whereas the question-answering 
task requires the joint use of visual and linguistic information for reasoning. 
We further conduct two preliminary studies to 
 systematically examine such task-dependent modality contributions. First, we split categories within 
 the same dataset into different tasks and
 estimate task-wise modality contributions using the interventional contribution
estimator introduced in Sec.~\ref{sec:causal_contribution}, showing that tasks
from the same dataset can exhibit distinct modality reliance patterns
(Fig.~\ref{motivation} (b)). Second, we treat each dataset as an 
individual task and identify substantial differences in modality contributions across 
datasets (Fig.~\ref{motivation} (c)). These observations show that different tasks or 
classes rely on distinct modality dependencies during prediction.

\begin{figure}[htbp] 
\vspace{-0.9em}
	\centering
	\includegraphics[width=0.99\textwidth]{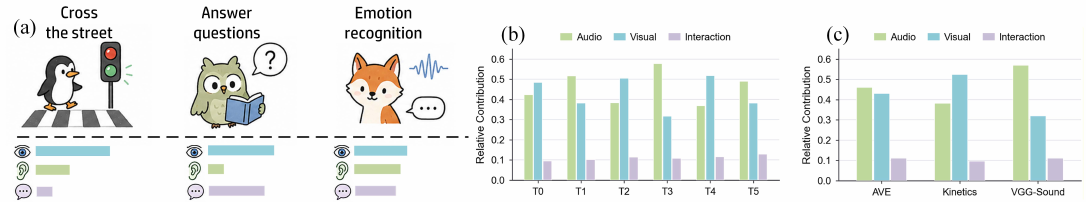}
\vspace{-0.5em}
	\caption{(a) Illustrative examples demonstrating that different tasks rely on distinct information sources. 
    (b) Contribution profiles of the audio, visual, and cross-modal interaction components across six tasks in
    AVE~\citep{tian2018ave}, revealing substantial task-specific variation in modality reliance.
    (c) Average modality contributions for AVE, Kinetics-Sounds~\citep{arandjelovic2017look},
    and VGGSound~\citep{chen2020vggsound}, showing distinct patterns of audio-visual reliance across datasets.}
	\label{motivation}
\vspace{-0.3em}
\end{figure}

When learning such tasks with different modality dependencies in a continual manner,
the model must adapt to new modality requirements over time. However, such
adaptation can interfere with the decision mechanisms established for learned
tasks when the modality dependencies of new tasks are inconsistent with those of
previous tasks. Despite its importance, \textbf{the evolution of modality contributions under changing 
task-specific modality dependencies remains underexplored in MMCL}, as existing studies primarily 
preserve cross-modal representations or semantic similarity rather than examine contribution 
stability across incremental stages. To address this gap, we first introduce an intervention-based 
diagnostic estimator that probes the contributions of individual modalities and cross-modal interactions to model 
decisions. Based on this estimator, we define the MCD score by combining contribution-strength
and scale-adjusted relative-reliance changes in previously learned contribution profiles across incremental stages.
Theoretical and empirical analyses further show that both conventional continual learning methods 
and representation-stability-based MMCL methods do not effectively prevent modality contribution drift.
Therefore, to mitigate modality contribution drift in MMCL, we propose Continual Modality 
Contribution Drift Regularization (CMCDR), which preserves the modality contribution structure 
established for previously learned tasks while allowing adaptation to new modality dependencies. 
CMCDR supports both replay-based and replay-free settings, making it applicable to MMCL scenarios with different 
exemplar availability. The main contributions of this work are summarized as follows.
\begin{list}{$\bullet$}{\setlength{\leftmargin}{1.2em}
\setlength{\itemsep}{0.15em}
\setlength{\topsep}{0.2em}
\setlength{\parsep}{0pt}
\setlength{\partopsep}{0pt}}
\item \textbf{Problem.} We identify and formalize modality contribution drift, an underexplored 
MMCL problem in which learning new tasks alters the modality contribution structures of previously 
learned tasks.
\item \textbf{Method.} We propose Continual Modality Contribution Drift Regularization (CMCDR), 
with replay-based and replay-free variants, to preserve old-task contribution structures while adapting 
to new modality dependencies.
\item \textbf{Performance.} We conduct extensive experiments on multimodal class-incremental learning 
and continual visual question answering, showing that CMCDR consistently
 mitigates modality contribution drift and improves performance across MMCL settings and benchmarks.
\end{list}
\vspace{-0.8em}

\section{Contribution drift in MMCL}
\label{sec:contribution_drift}
\vspace{-0.5em}

This section studies modality contribution drift (MCD) in MMCL through an
intervention-based diagnostic of modality contributions. We then relate MCD to
catastrophic forgetting and further provide theoretical and empirical
evidence showing that stable cross-modal representations do not necessarily imply
stable modality contributions.

\subsection{Intervention-Based Contribution Estimation}
\label{sec:causal_contribution}

We estimate modality contributions by intervening on modality subsets and
decomposing the induced predictive benefit. Let
$\mathcal{M}=\{1,\ldots,M\}$ be the modality set and
$\mathbf{x}_i=(x_i^1,\ldots,x_i^M)$ a multimodal sample. Let $F_\theta$ denote
the complete predictor, including the modality encoders, fusion module, and
classifier. For any $\mathcal S\subseteq\mathcal M$, introduce a modality-specific
absence symbol $\bot^m$ and define removal and the resulting response jointly as
\begin{equation}
\widetilde{x}_i^{m,\mathcal S}
=
\begin{cases}
x_i^m, & m\in\mathcal S,\\[-1pt]
\bot^m, & m\notin\mathcal S,
\end{cases}
\quad
\mathbf{x}_i^{\mathcal S}
 :=\mathcal R_{\mathcal S}(\mathbf{x}_i)
=\bigl(\widetilde{x}_i^{m,\mathcal S}\bigr)_{m=1}^{M},
\quad
\mathbf z_i(\mathcal S)
=F_\theta\!\left(\mathbf x_i^{\mathcal S}\right).
\label{eq:modality_intervention}
\end{equation}
Here, $\bot^m$ denotes the removal of modality $m$.
Architecture-specific realizations of modality removal are given in
Appendix~\ref{app:modality_removal}. For a candidate set $\mathcal A$ satisfying
$a\in\mathcal A$ and $|\mathcal A|\geq2$, the corresponding target-class benefit is
\begin{equation}
B_i^{a,\mathcal{A}}(\mathcal{S})
=
z_{i,a}(\mathcal{S})
-
\log
\sum_{\ell\in\mathcal{A},\,\ell\neq a}
\exp\bigl(z_{i,\ell}(\mathcal{S})\bigr),
\label{eq:benefit_function}
\end{equation}
where $\mathbf{z}_i(\mathcal{S})$ is the logit vector after the subset
intervention, $z_{i,a}(\mathcal{S})$ is the target-class logit, and
$z_{i,\ell}(\mathcal{S})$ are competing logits for classes
$\ell\in\mathcal{A}$ with $\ell\neq a$. Thus
$B_i^{a,\mathcal{A}}(\mathcal{S})$ is the target-vs-rest log-sum-exp margin
when only modalities in $\mathcal{S}$ are retained, with the candidate set
$\mathcal{A}$ fixed across stages. Taking the no-modality intervention as the control
condition, $B_i^{a,\mathcal{A}}(\mathcal{S})-B_i^{a,\mathcal{A}}(\emptyset)$
measures the decision-level effect of retaining $\mathcal{S}$ on a log-odds
margin scale, which is decomposed below into modality-specific and interaction
terms. Specifically, $B_i^{a,\mathcal{A}}(\emptyset)$ denotes the no-modality
response and $B_i^{a,\mathcal{A}}(\mathcal{M})$ denotes the benefit of the
full multimodal input.

For each nonempty modality coalition $\mathcal{T}\subseteq\mathcal{M}$,
we define its contribution by the M\"obius transform~\citep{harsanyi1963}:
\begin{equation}
C_i^{a,\mathcal{A}}(\mathcal{T})
=
\sum_{\mathcal{S}\subseteq\mathcal{T}}
(-1)^{|\mathcal{T}|-|\mathcal{S}|}
B_i^{a,\mathcal{A}}(\mathcal{S}),
\qquad
\mathbf{C}_i
=
\left[
C_i^{a,\mathcal{A}}(\mathcal{T})
\right]_{\emptyset\neq\mathcal{T}\subseteq\mathcal{M}} .
\label{eq:general_contribution}
\end{equation}
By M\"obius inversion, the profile exactly decomposes the full-input
benefit over the no-modality response:
\begin{equation}
\sum_{\emptyset\neq\mathcal{T}\subseteq\mathcal{M}}
C_i^{a,\mathcal{A}}(\mathcal{T})
=
B_i^{a,\mathcal{A}}(\mathcal{M})-B_i^{a,\mathcal{A}}(\emptyset).
\label{eq:contribution_completeness}
\end{equation}
The contribution profile is computed directly from the benefits of
all modality subsets.
Appendix~\ref{app:estimator_analysis} provides a detailed analysis of the
interventional benefit $B$ and the resulting M\"obius decomposition.

\subsection{Analysis of Modality Contribution Drift}

Building on the estimator above, we quantify the evolution of the modality-reliance structure 
across incremental tasks. We first define the MCD score using class-level contribution profiles, and 
then examine whether conventional continual-learning 
mechanisms and multimodal representation alignment are sufficient to suppress this drift.

\subsubsection{Exploration Setup}

\begin{figure*}[htbp]
	\centering
	\includegraphics[width=0.95\textwidth]{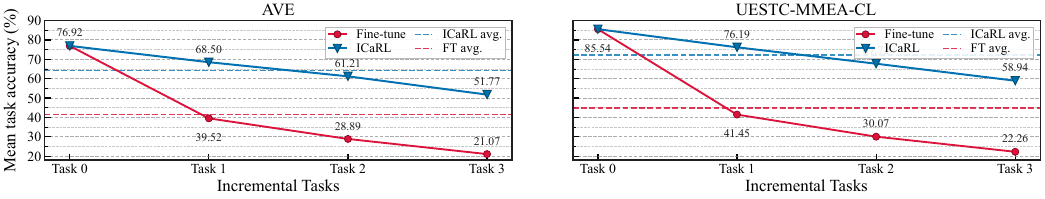}
	\caption{
    Mean task accuracy across incremental tasks in the exploratory diagnostic
    run on AVE and UESTC-MMEA-CL. These trajectories are reported separately
    from the unified main-evaluation results in Tables~\ref{tab:class_incremental_results}
    and~\ref{tab:rep_preservation_complementarity}.
    }
	\label{zhexian1}
\end{figure*}
To ensure a controlled analysis, we use iCaRL~\citep{rebuffi2017icarl} as the baseline and evaluate it on
two representative MMCL datasets, AVE and UESTC-MMEA-CL~\citep{xu2024uestc}. All task construction,
feature extraction, model architecture, and training details are provided in
Appendix~\ref{app:exploratory_setup}. As shown in Fig.~\ref{zhexian1}, iCaRL exhibits a consistent accuracy
decline as incremental learning proceeds on both datasets. On AVE, the
mean task accuracy drops from 76.92\% at Task~0 to 51.77\% at Task~3,
with an average accuracy of 64.60\%, recomputed from the four displayed
stage values. On UESTC-MMEA-CL, it decreases from
85.54\% to 58.94\%, with an average accuracy of 72.10\%. Compared with
Fine-tune, iCaRL achieves higher accuracy in later tasks, indicating that
it alleviates catastrophic forgetting and better preserves previously
learned multimodal knowledge.

\subsubsection{Modality Contribution Drift in MMCL}

The results above show that iCaRL alleviates classification forgetting, but
they do not indicate whether previously learned classes retain their modality
contribution structure. To measure this drift, we define MCD as a composite
score and retain two diagnostic components: absolute-response drift (Abs-MCD)
and relative-reliance drift (Rel-MCD).
For class $y$, fix $\mathcal A_y=\mathcal Y_{\leq\tau(y)}$ after its
introduction. The probe bank is not fixed globally. After learning task $k$, we
add a probe cohort $\mathcal P_k$ and store its stage-$k$ contribution profiles.
For $i\in\mathcal P_k$ and a later stage $t>k$, define
\begin{equation}
\begin{aligned}
\mathbf c_{i,t}
&=\left[C_{i,t}^{y_i,\mathcal A_{y_i}}(\mathcal T)
\right]_{\emptyset\neq\mathcal T\subseteq\mathcal M},
&
\widehat{\mathbf c}_{i,t}
&=\frac{\mathbf c_{i,t}}{\|\mathbf c_{i,t}\|_1+\epsilon},
\end{aligned}
\label{eq:relative_contribution_vector}
\end{equation}
where coalition contributions follow Eq.~\ref{eq:general_contribution}, the
ordering is fixed, and $\epsilon>0$. For $s\in\{k,t\}$, let
$s_{i,s}=\|\mathbf c_{i,s}\|_1$ and define
$d^{\mathrm{abs}}_{i,t}=\|\mathbf c_{i,t}-\mathbf c_{i,k}\|_1$,
$d^{\mathrm{rel}}_{i,t}=\|\widehat{\mathbf c}_{i,t}-
\widehat{\mathbf c}_{i,k}\|_1$. At stage $t$, the dynamically expanding bank is
$\mathcal P_{<t}=\bigcup_{k\in\mathcal K_{<t}}\mathcal P_k$, where
$\mathcal K_{<t}$ contains all previous tasks. We explicitly compute MCD as
\begin{equation}
\mathrm{MCD}_t
=
\frac{1}{|\mathcal K_{<t}|}
\sum_{k\in\mathcal K_{<t}}
\frac{1}{|\mathcal P_k|}
\sum_{i\in\mathcal P_k}
\left[
|s_{i,t}-s_{i,k}|+
(\min(s_{i,t},s_{i,k})+\epsilon)
\|\widehat{\mathbf c}_{i,t}-\widehat{\mathbf c}_{i,k}\|_1
\right].
\label{eq:mcd}
\end{equation}
The bracketed term is the per-probe score $d^{\mathrm{MCD}}_{i,t}$.
Abs-MCD and Rel-MCD use the same task-balanced aggregation with
$d^{\mathrm{abs}}_{i,t}$ and $d^{\mathrm{rel}}_{i,t}$, respectively.
Abs-MCD measures changes in contribution-response magnitude and composition,
whereas Rel-MCD isolates changes in how evidence is allocated across modalities
and interactions. Their composite, MCD, combines contribution-strength drift
with scale-adjusted relative-reliance drift and serves as our primary aggregate;
we retain both components for diagnosis.
MCD is thus a candidate-conditioned diagnostic of established contribution
drift; full-space accuracy and forgetting retain later-class competition
(Appendix~\ref{app:later_class_competition}).
\begin{wrapfigure}{r}{0.5\textwidth}
	\vspace{-1.2em}
	\centering
	\includegraphics[width=\linewidth]{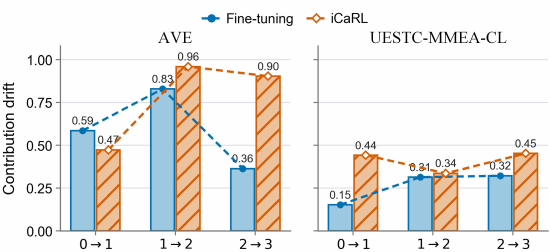}
\caption{MCD across incremental transitions in the exploratory
diagnostic run on AVE and UESTC-MMEA-CL.}
	\label{zhexian2}
	\vspace{-1.0em}
\end{wrapfigure}

To examine whether conventional continual learning mechanisms preserve this
contribution structure, we compare fine-tuning and iCaRL using MCD.
As shown in Figure~\ref{zhexian2}, iCaRL does not consistently reduce
modality contribution drift compared with fine-tuning. On AVE, the
average transition-level MCD of iCaRL is higher than that
of fine-tuning ($0.7785$ vs. $0.5929$), although iCaRL reduces the drift
in the first transition. On UESTC-MMEA-CL, iCaRL also shows a slightly
higher average transition-level MCD than fine-tuning ($0.4100$ vs.
$0.2628$). These results suggest that, although exemplar replay and distillation can
alleviate classification forgetting, they do not necessarily preserve the
modality contribution structure of previously learned classes. This
motivates the need to consider contribution-level stability.

\paragraph{How Contribution Drift Can Cause Forgetting}
For an old sample $i$, let $\tau(y_i)\leq r<t$, fix a candidate set
$\mathcal A_i\ni y_i$ across stages $r$ and $t$, and define
$\bar B_{i,s}(\mathcal S)=B_{i,s}(\mathcal S)-B_{i,s}(\emptyset)$,
$m^{\mathrm{keep}}_{i,r\to t}
=B_{i,t}(\emptyset)+\bar B_{i,r}(\mathcal M)$, and
\begin{equation}
\Delta^{\mathrm{contrib}}_{i,r\to t}
:=\bar B_{i,t}(\mathcal M)-\bar B_{i,r}(\mathcal M).
\label{eq:contribution_response_shift}
\end{equation}
Here $m^{\mathrm{keep}}_{i,r\to t}$ is the counterfactual margin obtained by
retaining the reference contribution response while keeping the stage-$t$
baseline response. The identity
$B_{i,t}(\mathcal M)=m^{\mathrm{keep}}_{i,r\to t}
+\Delta^{\mathrm{contrib}}_{i,r\to t}$ therefore isolates the part of the
margin change caused by contribution drift.

\begin{proposition}[Contribution drift can cross the retention margin]
\label{prop:contribution_margin_loss}
Let $K_i=|\mathcal A_i|\geq2$ and let $\gamma_{i,t}$ denote the target-class
logit margin over its strongest competitor at stage $t$. Then
\begin{equation}
m^{\mathrm{keep}}_{i,r\to t}+\Delta^{\mathrm{contrib}}_{i,r\to t}
\leq \gamma_{i,t}\leq
m^{\mathrm{keep}}_{i,r\to t}+\Delta^{\mathrm{contrib}}_{i,r\to t}
+\log(K_i-1).
\label{eq:contribution_margin_bound}
\end{equation}
If $\Delta^{\mathrm{contrib}}_{i,r\to t}
>-m^{\mathrm{keep}}_{i,r\to t}$, class $y_i$ remains the strict prediction over
$\mathcal A_i$. If
$\Delta^{\mathrm{contrib}}_{i,r\to t}
\leq-m^{\mathrm{keep}}_{i,r\to t}-\log(K_i-1)$,
the contribution shift is sufficient to remove that prediction. Conversely,
whenever a contribution change turns a counterfactually retained prediction
($m^{\mathrm{keep}}_{i,r\to t}>0$) into a forgotten one, it must satisfy
$\Delta^{\mathrm{contrib}}_{i,r\to t}
\leq-m^{\mathrm{keep}}_{i,r\to t}$.
\end{proposition}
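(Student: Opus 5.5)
The plan is to reduce everything to the algebraic identity stated just before the proposition, $B_{i,t}(\mathcal M)=m^{\mathrm{keep}}_{i,r\to t}+\Delta^{\mathrm{contrib}}_{i,r\to t}$, and then sandwich the log-sum-exp margin $B_{i,t}(\mathcal M)$ against the hard margin $\gamma_{i,t}$. First I would verify the identity by expanding the definitions: $m^{\mathrm{keep}}_{i,r\to t}+\Delta^{\mathrm{contrib}}_{i,r\to t}=B_{i,t}(\emptyset)+\bar B_{i,r}(\mathcal M)+\bar B_{i,t}(\mathcal M)-\bar B_{i,r}(\mathcal M)=B_{i,t}(\emptyset)+\bar B_{i,t}(\mathcal M)=B_{i,t}(\mathcal M)$. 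The reference terms cancel exactly, so no assumption on stage $r$ is needed beyond $\mathcal A_i$ being the same at both stages.

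With the identity in hand, Eq.~\ref{eq:contribution_margin_bound} is equivalent to
\[
B_{i,t}(\mathcal M)\leq\gamma_{i,t}\leq B_{i,t}(\mathcal M)+\log(K_i-1).
\]
Write $z_\ell=z_{i,\ell,t}(\mathcal M)$, $a=y_i$, and $z^\ast=\max_{\ell\in\mathcal A_i\setminus\{a\}}z_\ell$, so that $\gamma_{i,t}=z_a-z^\ast$. By Eq.~\ref{eq:benefit_function}, $B_{i,t}(\mathcal M)=z_a-\mathrm{LSE}$, where $\mathrm{LSE}=\log\sum_{\ell\neq a}\exp(z_\ell)$ is taken over the $K_i-1\geq1$ competitors. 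The standard log-sum-exp bounds
\[
z^\ast\leq\mathrm{LSE}\leq z^\ast+\log(K_i-1)
\]
follow because the sum contains the maximal term and has at most $K_i-1$ terms, each bounded by $e^{z^\ast}$. Subtracting these from $z_a$ gives both inequalities.

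The three consequences then follow from the sandwich. If $\Delta^{\mathrm{contrib}}_{i,r\to t}>-m^{\mathrm{keep}}_{i,r\to t}$, the lower bound gives $\gamma_{i,t}>0$, so $y_i$ is the strict argmax over $\mathcal A_i$. If $\Delta^{\mathrm{contrib}}_{i,r\to t}\leq-m^{\mathrm{keep}}_{i,r\to t}-\log(K_i-1)$, the upper bound gives $\gamma_{i,t}\leq0$, so $y_i$ is no longer the strict prediction; I read "remove that prediction" as loss of strict argmax, with ties counting as failure. For the converse, "forgotten" means $\gamma_{i,t}\leq0$. The lower bound then forces $m^{\mathrm{keep}}_{i,r\to t}+\Delta^{\mathrm{contrib}}_{i,r\to t}\leq0$, which is the contrapositive of the first claim. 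The hypothesis $m^{\mathrm{keep}}_{i,r\to t}>0$ only supplies interpretation, namely that the drift is what flips the decision; it is not needed for the inequality itself.

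I expect no real obstacle here. The only delicate point is the convention about ties, so that "strict prediction" and "forgotten" are exact complements ($\gamma_{i,t}>0$ versus $\gamma_{i,t}\leq0$). The argument must also keep the competitor set fixed to $\mathcal A_i$ rather than the full label space, so that $\gamma_{i,t}$ and $B$ use the same candidates. I would state both conventions explicitly at the start.
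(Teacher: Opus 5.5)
Your proposal is correct and follows essentially the same route as the paper. You establish the identity $B_{i,t}(\mathcal M)=m^{\mathrm{keep}}_{i,r\to t}+\Delta^{\mathrm{contrib}}_{i,r\to t}$, combine it with the log-sum-exp sandwich $B_{i,t}(\mathcal M)\leq\gamma_{i,t}\leq B_{i,t}(\mathcal M)+\log(K_i-1)$, and then read off the three consequences from the lower and upper bounds; the paper cites this sandwich from its estimator appendix instead of re-deriving it, and it likewise uses only the lower bound for the converse, so your remark that $m^{\mathrm{keep}}_{i,r\to t}>0$ is not needed there matches its proof.
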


Proposition~\ref{prop:contribution_margin_loss} demonstrates that contribution
drift can remove an old prediction when the negative contribution shift is
sufficiently large. It also shows that any contribution-drift-induced
forgetting must overcome the retained margin. The proof is given in
\textbf{Appendix~\ref{app:contribution_forgetting_proof}}.

\subsubsection{Effect of Cross-modal Representation Stability on Contribution Drift}

The previous results show that replay and distillation do not necessarily
preserve modality contribution structures. Recent MMCL methods have increasingly focused on stabilizing
cross-modal representations and semantic similarity, for example by maintaining
modality alignment or preserving cross-modal similarity structures. However, 
whether such representation-level
stability also preserves decision-level modality contributions remains
unclear. To examine this question, we build on iCaRL and incorporate the
Dual-Audio-Visual Similarity Constraint (D-AVSC) from
AV-CIL~\citep{pian2023avcil}, denoted RepAlign below, and the Cross-modal
Similarity Distillation Constraint (CrossSDC)~\citep{pian2024contavsep}.
For each incremental step, we evaluate the old and
updated models on the same samples, measure representation drift with
Centered Kernel Alignment (CKA), and compute MCD using the protocol above.
All metrics are averaged over incremental steps on AVE and UESTC-MMEA-CL. The detailed
experimental settings are provided in \textbf{Appendix~\ref{app:cross_modal_stability_setup}}.

\begin{figure}[htbp]
    \centering
    \includegraphics[width=0.96\columnwidth]{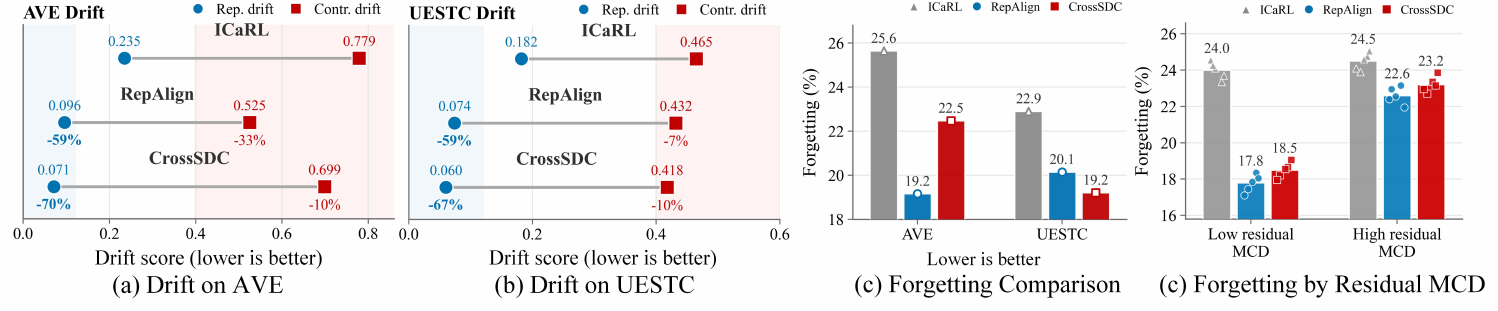}
    \caption{Representation stability and residual MCD in MMCL.}
    \label{stability_panels}
    \vspace{-0.8em}
\end{figure}

As shown in Figure~\ref{stability_panels}, the results examine whether stabilizing cross-modal
representations also stabilizes modality contribution structures.
Figures~\ref{stability_panels}~(a,b) show that RepAlign and CrossSDC reduce
representation drift by about 59\%--70\% on AVE and 59\%--67\% on UESTC-MMEA-CL,
whereas MCD decreases much less. Figure~\ref{stability_panels}~(c) shows that
these representation-alignment methods reduce average forgetting on both
datasets, confirming that representation alignment alleviates forgetting.
Figure~\ref{stability_panels}~(d) further separates old classes by their
residual MCD after representation alignment. Bars report group-level forgetting
and points report class-level forgetting. Classes with low residual MCD obtain
clear forgetting reductions, whereas classes with high residual MCD remain much
closer to iCaRL. These results show that representation-alignment methods reduce forgetting, 
but remain ineffective for classes with high contribution drift.

\textbf{Why representation stability is insufficient.} A multimodal model
contains two distinct components: an intervention-dependent representation map
that produces modality-subset features, and a decision map that converts these
features into class margins. Representation stability constrains the former,
whereas contribution stability concerns the decision-level behavior induced by
modality subsets. Therefore, preserving
features, feature similarities, or cross-modal alignment does not by itself
guarantee that the downstream decision rule reacts to each modality subset in
the same way across stages. The results above empirically illustrate this
distinction using CKA and MCD. The following proposition formalizes the
underlying structural separation.

\begin{proposition}[Representation stability is insufficient for contribution stability]
\label{prop:representation_insufficient}
For a sample $i$ with old-class target $a$ and stages $\tau(a)\leq r<t$, fix
$\mathcal{A}=\mathcal{A}_a$. Let $\mathbf u_{i,s}(\mathcal S)$ denote the
modality-subset representation at stage $s$ in a common normed space, and let
$g_s^{a,\mathcal A}$ be
the corresponding target-vs-rest margin map, so that
$B_{i,s}^{a,\mathcal A}(\mathcal S)
=g_s^{a,\mathcal A}(\mathbf u_{i,s}(\mathcal S))$. Define
\begin{equation}
\begin{aligned}
\Delta_h(\mathcal S)
=\|\mathbf u_{i,t}(\mathcal S)-\mathbf u_{i,r}(\mathcal S)\|,
\Delta_g(\mathcal S)
=\left|g_t^{a,\mathcal A}(\mathbf u_{i,r}(\mathcal S))
-g_r^{a,\mathcal A}(\mathbf u_{i,r}(\mathcal S))\right|.
\end{aligned}
\label{eq:representation_decision_drift}
\end{equation}
If $g_t^{a,\mathcal A}$ is $L_g$-Lipschitz with respect to this norm, then
for any nonempty
$\mathcal{T}\subseteq\mathcal{M}$,
\begin{equation}
\left|
C_{i,t}^{a,\mathcal{A}}(\mathcal{T})
-
C_{i,r}^{a,\mathcal{A}}(\mathcal{T})
\right|
\leq
\sum_{\mathcal{S}\subseteq\mathcal{T}}
\left(
L_g\Delta_h(\mathcal{S})
+
\Delta_g(\mathcal{S})
\right).
\label{eq:representation_to_contribution_bound}
\end{equation}
Let $\bar B_{i,s}^{a,\mathcal A}(\mathcal S)
=B_{i,s}^{a,\mathcal A}(\mathcal S)-B_{i,s}^{a,\mathcal A}(\emptyset)$.
If every nonempty $\mathcal S\subseteq\mathcal T$ satisfies
$|\bar B_{i,t}^{a,\mathcal A}(\mathcal S)
-\bar B_{i,r}^{a,\mathcal A}(\mathcal S)|\leq\epsilon_{\bar B}$, then
$|C_{i,t}^{a,\mathcal{A}}(\mathcal{T})-C_{i,r}^{a,\mathcal{A}}(\mathcal{T})|
\leq (2^{|\mathcal{T}|}-1)\epsilon_{\bar B}$.
Moreover, there exist stage-invariant modality-subset representations and two
decision maps such that $\Delta_h(\mathcal S)=0$ for every $\mathcal S$, while
$C_{i,t}^{a,\mathcal A}(\mathcal T)\neq C_{i,r}^{a,\mathcal A}(\mathcal T)$
for some nonempty $\mathcal T$.
\end{proposition}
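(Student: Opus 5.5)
The plan is to prove the three claims of Proposition~\ref{prop:representation_insufficient} separately, all starting from the Möbius definition in Eq.~\ref{eq:general_contribution}. For the first bound, I write the contribution difference as an alternating sum of benefit differences:
\[
C_{i,t}^{a,\mathcal A}(\mathcal T)-C_{i,r}^{a,\mathcal A}(\mathcal T)=\sum_{\mathcal S\subseteq\mathcal T}(-1)^{|\mathcal T|-|\mathcal S|}\bigl(B_{i,t}^{a,\mathcal A}(\mathcal S)-B_{i,r}^{a,\mathcal A}(\mathcal S)\bigr).
\]
The triangle inequality then bounds its absolute value by $\sum_{\mathcal S\subseteq\mathcal T}|B_{i,t}(\mathcal S)-B_{i,r}(\mathcal S)|$. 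For each $\mathcal S$, including $\emptyset$, I add and subtract the mixed term $g_t^{a,\mathcal A}(\mathbf u_{i,r}(\mathcal S))$:
\[
|B_{i,t}(\mathcal S)-B_{i,r}(\mathcal S)|\le |g_t(\mathbf u_{i,t}(\mathcal S))-g_t(\mathbf u_{i,r}(\mathcal S))|+|g_t(\mathbf u_{i,r}(\mathcal S))-g_r(\mathbf u_{i,r}(\mathcal S))|.
\]
The Lipschitz assumption on $g_t$ bounds the first term by $L_g\Delta_h(\mathcal S)$, and the second term is $\Delta_g(\mathcal S)$ by definition. Summing over $\mathcal S$ gives Eq.~\ref{eq:representation_to_contribution_bound}.

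For the second claim, I use that a nonempty $\mathcal T$ satisfies $\sum_{\mathcal S\subseteq\mathcal T}(-1)^{|\mathcal T|-|\mathcal S|}=0$. This lets me replace $B$ by $\bar B$ in the Möbius sum without changing $C$: the subtracted constant $B_{i,s}(\emptyset)$ cancels. So $C_{i,s}(\mathcal T)=\sum_{\mathcal S\subseteq\mathcal T}(-1)^{|\mathcal T|-|\mathcal S|}\bar B_{i,s}(\mathcal S)$. The $\mathcal S=\emptyset$ term vanishes because $\bar B_{i,s}(\emptyset)=0$. The triangle inequality over the remaining $2^{|\mathcal T|}-1$ nonempty subsets, each contributing at most $\epsilon_{\bar B}$, gives the bound $(2^{|\mathcal T|}-1)\epsilon_{\bar B}$.

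For the existence claim, I build an explicit example with $M=2$ and $\mathcal T=\{1,2\}$. I take fixed representations $\mathbf u(\mathcal S)\in\mathbb R^2$ that are the same at both stages, namely $\mathbf u(\emptyset)=(0,0)$, $\mathbf u(\{1\})=(1,0)$, $\mathbf u(\{2\})=(0,1)$, $\mathbf u(\{1,2\})=(1,1)$, so $\Delta_h\equiv0$. To realize the decision maps as genuine target-vs-rest margins, I use $\mathcal A=\{a,b\}$ with a linear head. Then $B=z_a-z_b=\mathbf w_s^\top\mathbf u$ for a weight difference vector $\mathbf w_s$.

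With linear maps, however, the interaction term $C(\{1,2\})$ is always zero. I therefore compare the singleton coalition $\mathcal T=\{1\}$ instead, taking $\mathbf w_r=(1,0)$ and $\mathbf w_t=(0,1)$. This gives $C_r(\{1\})=1$ and $C_t(\{1\})=0$, which differ, as required. The logits are realized by the classifier weights, so the construction is a valid complete predictor. The only real difficulty, which is mild, is making sure the constructed $g_s$ arises from actual logits over the candidate set $\mathcal A$; with $|\mathcal A|=2$ the log-sum-exp reduces to a plain logit difference, which removes the obstacle.
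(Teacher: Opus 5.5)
Your proposal is correct and follows the paper's proof essentially step for step. The paper likewise uses the add-and-subtract of $g_t^{a,\mathcal A}(\mathbf u_{i,r}(\mathcal S))$ with the Lipschitz bound and the M\"obius triangle inequality, the cancellation $\sum_{\mathcal S\subseteq\mathcal T}(-1)^{|\mathcal T|-|\mathcal S|}=0$ for the centered bound, and the same two-modality linear $|\mathcal A|=2$ construction with representations $(0,0),(1,0),(0,1),(1,1)$. The only difference is your weights $\mathbf w_r=(1,0)^\top$, $\mathbf w_t=(0,1)^\top$ in place of the paper's $w_r=(0,0)^\top$, $w_t=(1,-1)^\top$; this is equally valid, and it also keeps the full-input margin equal to $1$ at both stages, so the target stays strictly predicted while reliance moves from modality 1 to modality 2.
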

\textbf{Proposition~\ref{prop:representation_insufficient}} demonstrates that 
representation stability alone does not guarantee contribution stability.
Specifically, Eq.~\ref{eq:representation_to_contribution_bound} bounds
contribution drift by a representation term $L_g\Delta_h(\mathcal S)$ and a
decision-map term $\Delta_g(\mathcal S)$. Representation regularization
constrains only the former, so contributions can still change when the decision
map shifts. The detailed proof is provided in
\textbf{Appendix~\ref{sub:proof_of_prop_representation_insufficient}}.

\section{Method}
\label{sec:method}

The preceding analysis shows that mitigating modality contribution drift
requires directly constraining old-task contribution profiles. We therefore propose Continual Modality Contribution Drift
Regularization (CMCDR), which compares the current model with a frozen
previous model under the same modality-subset interventions. CMCDR supports
two settings: with exemplars, it matches contribution profiles on replayed
old samples. Without exemplars, it uses current-task samples as probes and
distills the old-class contribution responses of the frozen model.

\subsection{Contribution-Profile Computation}

At incremental step $t$, let $F_{\theta_t}$ be the current model,
$F_{\theta_{t-1}}$ the frozen previous model, and $\mathcal{Y}_{<t}$ the
old-class set. For any sample $\mathbf{x}_i$, stage $s\in\{t-1,t\}$, and
old-class output index $k\in\mathcal{Y}_{<t}$, CMCDR evaluates
$F_{\theta_s}$ under Eq.~\ref{eq:modality_intervention} and instantiates
Eqs.~\ref{eq:benefit_function}--\ref{eq:general_contribution} with
$(a,\mathcal{A})=(k,\mathcal{Y}_{<t})$. This yields the contribution profile
$\mathbf{C}^{k}_{i,s}$ over all nonempty modality coalitions. Restricting the
candidate set to $\mathcal{Y}_{<t}$ isolates the old-task decision structure
from newly introduced classes. For replayed old samples, $k=y_i$. In the
replay-free case, $k$ selects an old-class output whose interventional
responses are preserved and is not a pseudo-label for the current probe.

CMCDR compares current and frozen profiles with
\begin{equation}
\begin{aligned}
d_{\mathrm{cp}}(\mathbf{C},\mathbf{C}^{-})
=
\rho
\left(
\mathbf{C},
\operatorname{sg}(\mathbf{C}^{-})
\right)
+
\beta\,
\rho
\left(
\frac{\mathbf{C}}{\|\mathbf{C}\|_1+\epsilon},
\operatorname{sg}
\left(
\frac{\mathbf{C}^{-}}{\|\mathbf{C}^{-}\|_1+\epsilon}
\right)
\right),
\end{aligned}
\label{eq:cmcdr_profile_distance}
\end{equation}
where $\rho(\cdot,\cdot)$ is the mean Smooth-$L_1$ loss,
$\operatorname{sg}(\cdot)$ stops gradients through the frozen model,
$\epsilon>0$, and $\beta>0$ balances absolute and relative profile matching.
The first term matches the raw contribution vectors, while the second matches 
their normalized versions to preserve relative modality and interaction patterns.

\subsection{Replay-Based Contribution Constraint}

When old exemplars are available, CMCDR applies the shared computation to
replayed samples $\mathcal{X}_{\mathrm{old}}\subseteq\mathcal{E}_{<t}$.
For each $(\mathbf{x}_i,y_i)\in\mathcal{X}_{\mathrm{old}}$, the profile index
is set to the ground-truth old label $k=y_i$.

The frozen previous model provides $\mathbf{C}^{y_i}_{i,t-1}$, and the
current model produces $\mathbf{C}^{y_i}_{i,t}$. The replay-based
objective is
\begin{equation}
\mathcal{L}^{\mathrm{rep}}_{\mathrm{cmcdr}}
=
\frac{1}{|\mathcal{X}_{\mathrm{old}}|}
\sum_{(\mathbf{x}_i,y_i)\in\mathcal{X}_{\mathrm{old}}}
d_{\mathrm{cp}}
\left(
\mathbf{C}^{y_i}_{i,t},
\mathbf{C}^{y_i}_{i,t-1}
\right).
\label{eq:replay_cmcdr_loss}
\end{equation}
Thus, replay-based CMCDR directly constrains old-sample contribution drift
at both the modality-specific and interaction levels.

\subsection{Replay-Free Contribution Constraint}

In the replay-free setting, no old-class samples are available. CMCDR
therefore uses current-task samples
$\mathcal{X}_{\mathrm{new}}$ as intervention probes and preserves the
old-class response structure of the frozen previous model. For each probe
$\mathbf{x}_i\in\mathcal{X}_{\mathrm{new}}$ and old-class logit index
$k\in\mathcal{Y}_{<t}$, we apply the shared profile computation above.

Since $\mathbf{x}_i$ is not an old-class sample, $k$ is not a pseudo-label. It
only specifies the old-class logit whose interventional behavior is
preserved.
For each probe sample, the frozen previous model produces the target
profiles $\{\mathbf{C}^{k}_{i,t-1}\}_{k\in\mathcal{Y}_{<t}}$, and the
current model produces the corresponding profiles
$\{\mathbf{C}^{k}_{i,t}\}_{k\in\mathcal{Y}_{<t}}$. Matching all old
logits equally can introduce noise, as many old logits may have weak
teacher responses to a given probe. We therefore weight each old logit
by the teacher's full-modality probability over the old-class output
space:
\begin{equation}
q_{i,k}
=
\frac{
\exp
\left(
z_{i,k,t-1}(\mathcal{M})/T
\right)
}{
\sum_{j\in\mathcal{Y}_{<t}}
\exp
\left(
z_{i,j,t-1}(\mathcal{M})/T
\right)
},
\label{eq:teacher_old_class_weight}
\end{equation}
where $T$ is the temperature. The weight $q_{i,k}$ is computed from the
frozen previous model and fixed during optimization.

The replay-free objective is
\begin{equation}
\mathcal{L}^{\mathrm{free}}_{\mathrm{cmcdr}}
=
\frac{1}{|\mathcal{X}_{\mathrm{new}}|}
\sum_{\mathbf{x}_i\in\mathcal{X}_{\mathrm{new}}}
\sum_{k\in\mathcal{Y}_{<t}}
q_{i,k}
d_{\mathrm{cp}}
\left(
\mathbf{C}^{k}_{i,t},
\mathbf{C}^{k}_{i,t-1}
\right).
\label{eq:free_cmcdr_loss}
\end{equation}
This objective uses current samples to probe the old output space and distills the frozen
previous model's subset-wise contribution responses into the current
model.

\paragraph{Theoretical analysis.}
Appendix~\ref{app:cmcdr_theory} proves that replay-based CMCDR controls contribution
drift (Lemma~\ref{lem:cmcdr_controls_mcd}) and extends the analysis to
the replay-free setting (Proposition~\ref{prop:replay_free_drift_bound}). Since
Proposition~\ref{prop:contribution_margin_loss} shows that a sufficiently
negative contribution shift can exhaust the retained margin and cause
forgetting, limiting such shifts mitigates contribution-drift-induced forgetting.

\section{Experiments}

We empirically evaluate CMCDR on representative MMCL benchmarks and
baselines. We first describe the experimental
setup, and then address the following questions:
\begin{itemize}
    \item \textbf{Q1}. Does CMCDR improve the performance of existing MMCL methods?
    \item \textbf{Q2}. Can CMCDR provide gains beyond MMCL methods that preserve cross-modal representations?
    \item \textbf{Q3}. Does CMCDR reduce modality contribution drift and alleviate forgetting?
\end{itemize}

\subsection{Experimental Setup}

\textbf{Benchmarks.} We evaluate CMCDR under two representative MMCL protocols: class-incremental recognition 
and continual multimodal question answering. For class-incremental 
learning, we use AVE~\citep{tian2018ave}, Kinetics-Sounds~\citep{arandjelovic2017look},
and UESTC-MMEA-CL~\citep{xu2024uestc}. For continual QA, we evaluate 
on VQAv2~\citep{goyal2017vqav2} under the VQACL protocol~\citep{zhang2023vqacl}
and on Split-AVQA and Split-MUSIC-AVQA 
under the AVQACL protocol~\citep{wu2025avqacl}. For VQACL and AVQACL, we instantiate CMCDR by constructing contribution profiles from 
answer scores, with implementation 
details provided in Appendix~\ref{app:vqa_cmcdr}.

\textbf{Baselines.} We compare CMCDR with representative MMCL methods spanning classical continual learning,
audio-visual continual learning, continual visual question answering, and audio-visual question
answering continual learning~\citep{li2017lwf,kirkpatrick2017ewc,rebuffi2017icarl,
pian2023avcil,mo2023cign,yue2024mmal,zhang2023vqacl,qian2023dbi,
wu2025avqacl}.
CMCDR is applied as a plug-and-play contribution-preservation regularizer.
For exemplar-based methods, we use the replay objective on stored samples. For replay-free methods, we use the 
 frozen previous model and its previous-task outputs without storing data. All baselines retain their original architectures, 
 losses, and evaluation protocols. Details are provided in Appendix~\ref{app:main_training_setup}.
Appendix~\ref{app:comparison_distinctiveness} further positions CMCDR with respect to
multimodal continual learning and balanced multimodal learning, and specifies a
controlled comparison with modality-balancing regularization.
\subsection{Results}

\begin{table}[htbp]
\centering
\scriptsize
\setlength{\tabcolsep}{4pt}
\caption{Plug-and-play results on multimodal class-incremental learning.
Each dataset reports memory size (\#Mem), average accuracy (Avg. Acc., \%),
and average forgetting (Avg. Forget., \%). Parentheses report changes after
adding CMCDR.}
\label{tab:class_incremental_results}
\resizebox{\textwidth}{!}{
\begin{tabular}{lccccccccc}
\toprule
\textbf{Method} &
\multicolumn{3}{c}{\textbf{AVE}} &
\multicolumn{3}{c}{\textbf{Kinetics-Sounds}} &
\multicolumn{3}{c}{\textbf{UESTC-MMEA-CL}} \\
\cmidrule(lr){2-4}\cmidrule(lr){5-7}\cmidrule(lr){8-10}
& \textbf{\#Mem} & \textbf{Avg. Acc. $\uparrow$} & \textbf{Avg. Forget. $\downarrow$}
& \textbf{\#Mem} & \textbf{Avg. Acc. $\uparrow$} & \textbf{Avg. Forget. $\downarrow$}
& \textbf{\#Mem} & \textbf{Avg. Acc. $\uparrow$} & \textbf{Avg. Forget. $\downarrow$} \\
\midrule
LwF & None & 56.61 & 35.11 & None & 65.54 & 16.55 & None & 17.10 & 49.40 \\
\rowcolor{cmcdrgreen}
\textbf{LwF + CMCDR} & None & \textbf{60.61} \accdelta{4.00} & \textbf{26.35} \downdelta{8.76} & None & \textbf{69.12} \accdelta{3.58} & \textbf{12.64} \downdelta{3.91} & None & \textbf{22.80} \accdelta{5.70} & \textbf{38.70} \downdelta{10.70} \\
EWC & None & 41.28 & 68.97 & None & 58.42 & 31.84 & None & 19.60 & 29.90 \\
\rowcolor{cmcdrgreen}
\textbf{EWC + CMCDR} & None & \textbf{47.36} \accdelta{6.08} & \textbf{57.42} \downdelta{11.55} & None & \textbf{62.73} \accdelta{4.31} & \textbf{24.96} \downdelta{6.88} & None & \textbf{24.80} \accdelta{5.20} & \textbf{22.10} \downdelta{7.80} \\
iCaRL & 340 & 64.20 & 25.60 & 500 & 65.54 & 40.57 & 320 & 71.80 & 22.90 \\
\rowcolor{cmcdrgreen}
\textbf{iCaRL + CMCDR} & 340 & \textbf{71.90} \accdelta{7.70} & \textbf{17.40} \downdelta{8.20} & 500 & \textbf{70.86} \accdelta{5.32} & \textbf{31.24} \downdelta{9.33} & 320 & \textbf{75.20} \accdelta{3.40} & \textbf{17.20} \downdelta{5.70} \\
AV-CIL & 340 & 71.37 & 6.77 & 500 & 73.06 & 6.48 & 320 & 80.30 & 18.70 \\
\rowcolor{cmcdrgreen}
\textbf{AV-CIL + CMCDR} & 340 & \textbf{77.26} \accdelta{5.89} & \textbf{5.31} \downdelta{1.46} & 500 & \textbf{76.84} \accdelta{3.78} & \textbf{4.55} \downdelta{1.93} & 320 & \textbf{83.70} \accdelta{3.40} & \textbf{13.40} \downdelta{5.30} \\
CIGN & 340 & 76.92 & 10.84 & 500 & 74.35 & 8.76 & 320 & 82.10 & 16.30 \\
\rowcolor{cmcdrgreen}
\textbf{CIGN + CMCDR} & 340 & \textbf{79.85} \accdelta{2.93} & \textbf{7.11} \downdelta{3.73} & 500 & \textbf{77.48} \accdelta{3.13} & \textbf{6.02} \downdelta{2.74} & 320 & \textbf{85.20} \accdelta{3.10} & \textbf{11.90} \downdelta{4.40} \\
MMAL & None & 70.28 & 19.64 & None & 71.46 & 13.72 & None & 75.60 & 24.80 \\
\rowcolor{cmcdrgreen}
\textbf{MMAL + CMCDR} & None & \textbf{74.63} \accdelta{4.35} & \textbf{14.02} \downdelta{5.62} & None & \textbf{75.31} \accdelta{3.85} & \textbf{9.86} \downdelta{3.86} & None & \textbf{79.40} \accdelta{3.80} & \textbf{18.60} \downdelta{6.20} \\
\bottomrule
\end{tabular}
}
\end{table}

\textbf{Plug-and-Play Effectiveness (Q1).}
We test whether CMCDR generalizes across backbones, replay regimes, and task
formats by changing only the CMCDR term in each paired comparison. Tables~\ref{tab:class_incremental_results} and~\ref{tab:vqa_results} show
consistent accuracy gains of 2.13--7.70 points and forgetting reductions of
0.85--11.55 points across replay-based and replay-free methods. The consistency
of these paired gains supports CMCDR as a plug-and-play regularizer rather than
a benchmark-specific modification.

\begin{table}[htbp]
\centering
\scriptsize
\setlength{\tabcolsep}{4pt}
\caption{Plug-and-play results on continual multimodal question answering.
Each dataset reports a memory size (\#Mem), average accuracy (Avg. Acc., \%),
and average forgetting (Avg. Forget., \%). Parentheses report changes after
adding CMCDR.}
\label{tab:vqa_results}
\resizebox{\textwidth}{!}{
\begin{tabular}{lccccccccc}
\toprule
\textbf{Method} &
\multicolumn{3}{c}{\textbf{VQAv2}} &
\multicolumn{3}{c}{\textbf{Split-AVQA}} &
\multicolumn{3}{c}{\textbf{Split-MUSIC-AVQA}} \\
\cmidrule(lr){2-4}\cmidrule(lr){5-7}\cmidrule(lr){8-10}
& \textbf{\#Mem} & \textbf{Avg. Acc. $\uparrow$} & \textbf{Avg. Forget. $\downarrow$}
& \textbf{\#Mem} & \textbf{Avg. Acc. $\uparrow$} & \textbf{Avg. Forget. $\downarrow$}
& \textbf{\#Mem} & \textbf{Avg. Acc. $\uparrow$} & \textbf{Avg. Forget. $\downarrow$} \\
\midrule
LwF & None & 17.42 & 26.85 & None & 21.53 & 30.49 & None & 30.93 & 30.14 \\
\rowcolor{cmcdrgreen}
\textbf{LwF + CMCDR} & None & \textbf{20.31} \accdelta{2.89} & \textbf{21.44} \downdelta{5.41} & None & \textbf{24.86} \accdelta{3.33} & \textbf{24.11} \downdelta{6.38} & None & \textbf{34.27} \accdelta{3.34} & \textbf{24.88} \downdelta{5.26} \\
EWC & None & 15.77 & 30.62 & None & 18.89 & 40.69 & None & 28.76 & 32.58 \\
\rowcolor{cmcdrgreen}
\textbf{EWC + CMCDR} & None & \textbf{18.94} \accdelta{3.17} & \textbf{25.03} \downdelta{5.59} & None & \textbf{22.14} \accdelta{3.25} & \textbf{33.27} \downdelta{7.42} & None & \textbf{32.05} \accdelta{3.29} & \textbf{26.41} \downdelta{6.17} \\
iCaRL & 5000 & 30.62 & 12.48 & 5000 & 25.28 & 33.76 & 700 & 28.46 & 33.16 \\
\rowcolor{cmcdrgreen}
\textbf{iCaRL + CMCDR} & 5000 & \textbf{34.85} \accdelta{4.23} & \textbf{8.62} \downdelta{3.86} & 5000 & \textbf{28.74} \accdelta{3.46} & \textbf{27.45} \downdelta{6.31} & 700 & \textbf{31.92} \accdelta{3.46} & \textbf{27.64} \downdelta{5.52} \\
VQACL & 5000 & 38.77 & 3.96 & 5000 & 28.91 & 11.76 & 700 & 31.08 & 28.64 \\
\rowcolor{cmcdrgreen}
\textbf{VQACL + CMCDR} & 5000 & \textbf{41.36} \accdelta{2.59} & \textbf{2.71} \downdelta{1.25} & 5000 & \textbf{32.47} \accdelta{3.56} & \textbf{8.34} \downdelta{3.42} & 700 & \textbf{34.52} \accdelta{3.44} & \textbf{23.17} \downdelta{5.47} \\
DBI & None & 34.96 & 7.24 & None & 26.72 & 16.83 & None & 31.47 & 23.51 \\
\rowcolor{cmcdrgreen}
\textbf{DBI + CMCDR} & None & \textbf{38.21} \accdelta{3.25} & \textbf{4.86} \downdelta{2.38} & None & \textbf{30.18} \accdelta{3.46} & \textbf{12.06} \downdelta{4.77} & None & \textbf{35.06} \accdelta{3.59} & \textbf{18.92} \downdelta{4.59} \\
AVQACL & 5000 & 36.12 & 6.85 & 5000 & 32.05 & 2.47 & 700 & 33.64 & 27.08 \\
\rowcolor{cmcdrgreen}
\textbf{AVQACL + CMCDR} & 5000 & \textbf{39.44} \accdelta{3.32} & \textbf{4.31} \downdelta{2.54} & 5000 & \textbf{34.18} \accdelta{2.13} & \textbf{1.62} \downdelta{0.85} & 700 & \textbf{37.02} \accdelta{3.38} & \textbf{21.35} \downdelta{5.73} \\
\bottomrule
\end{tabular}
}
\end{table}

\begin{table}[htbp]
\centering
\scriptsize
\setlength{\tabcolsep}{4pt}
\caption{Complementarity with representation-preservation methods built on
continual-learning baselines. We report average accuracy (Avg. Acc., \%),
average forgetting (Avg. Forget., \%), and MCD on AVE and UESTC-MMEA-CL.}
\label{tab:rep_preservation_complementarity}
\resizebox{\textwidth}{!}{
\begin{tabular}{lcccccc}
\toprule
\textbf{Method} &
\multicolumn{3}{c}{\textbf{AVE}} &
\multicolumn{3}{c}{\textbf{UESTC-MMEA-CL}} \\
\cmidrule(lr){2-4}\cmidrule(lr){5-7}
& \textbf{Avg. Acc. $\uparrow$} & \textbf{Avg. Forget. $\downarrow$} & \textbf{MCD $\downarrow$}
& \textbf{Avg. Acc. $\uparrow$} & \textbf{Avg. Forget. $\downarrow$} & \textbf{MCD $\downarrow$} \\
\midrule
LwF & 56.61 & 35.11 & 0.842 & 17.10 & 49.40 & 0.512 \\
\rowcolor{cmcdrgreen}
\textbf{LwF + CMCDR} & \textbf{60.61} \accdelta{4.00} & \textbf{26.35} \downdelta{8.76} & \textbf{0.401} \downdelta{0.441} & \textbf{22.80} \accdelta{5.70} & \textbf{38.70} \downdelta{10.70} & \textbf{0.298} \downdelta{0.214} \\
LwF + RepAlign & 61.84 & 29.06 & 0.574 & 20.60 & 43.20 & 0.451 \\
\rowcolor{cmcdrgreen}
\textbf{LwF + RepAlign + CMCDR} & \textbf{65.22} \accdelta{3.38} & \textbf{23.41} \downdelta{5.65} & \textbf{0.331} \downdelta{0.243} & \textbf{24.90} \accdelta{4.30} & \textbf{35.10} \downdelta{8.10} & \textbf{0.286} \downdelta{0.165} \\
LwF + CrossSDC & 60.73 & 30.18 & 0.641 & 21.30 & 41.80 & 0.437 \\
\rowcolor{cmcdrgreen}
\textbf{LwF + CrossSDC + CMCDR} & \textbf{64.01} \accdelta{3.28} & \textbf{24.72} \downdelta{5.46} & \textbf{0.382} \downdelta{0.259} & \textbf{25.60} \accdelta{4.30} & \textbf{33.90} \downdelta{7.90} & \textbf{0.274} \downdelta{0.163} \\
\midrule
iCaRL & 64.20 & 25.60 & 0.779 & 71.80 & 22.90 & 0.465 \\
\rowcolor{cmcdrgreen}
\textbf{iCaRL + CMCDR} & \textbf{71.90} \accdelta{7.70} & \textbf{17.40} \downdelta{8.20} & \textbf{0.389} \downdelta{0.390} & \textbf{75.20} \accdelta{3.40} & \textbf{17.20} \downdelta{5.70} & \textbf{0.289} \downdelta{0.176} \\
iCaRL + RepAlign & 70.30 & 19.20 & 0.525 & 73.00 & 20.10 & 0.432 \\
\rowcolor{cmcdrgreen}
\textbf{iCaRL + RepAlign + CMCDR} & \textbf{74.60} \accdelta{4.30} & \textbf{15.30} \downdelta{3.90} & \textbf{0.301} \downdelta{0.224} & \textbf{76.80} \accdelta{3.80} & \textbf{14.60} \downdelta{5.50} & \textbf{0.241} \downdelta{0.191} \\
iCaRL + CrossSDC & 67.80 & 22.50 & 0.699 & 73.30 & 19.20 & 0.418 \\
\rowcolor{cmcdrgreen}
\textbf{iCaRL + CrossSDC + CMCDR} & \textbf{72.40} \accdelta{4.60} & \textbf{17.10} \downdelta{5.40} & \textbf{0.356} \downdelta{0.343} & \textbf{77.10} \accdelta{3.80} & \textbf{13.90} \downdelta{5.30} & \textbf{0.228} \downdelta{0.190} \\
\bottomrule
\end{tabular}
}
\end{table}

\textbf{Complementarity with Representation Preservation.}
Our analysis in Sec.~\ref{sec:contribution_drift} shows that stable
representations do not necessarily preserve the decision-level contributions
of individual modalities and their interactions. We therefore test whether
CMCDR addresses residual contribution drift after RepAlign or CrossSDC has
already stabilized cross-modal representations. For LwF and iCaRL, we compare
the backbone, CMCDR alone, representation preservation alone, and their joint
variant under an identical training protocol. Complementarity is supported if
the joint variant further reduces MCD and forgetting while improving accuracy
over the corresponding representation-preserving method.

\begin{figure}[H]
    \centering
    \includegraphics[width=\linewidth]{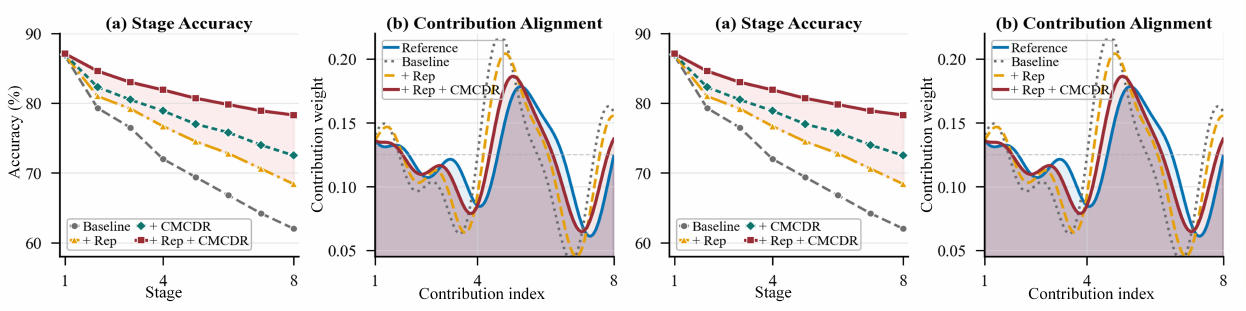}
    \caption{Complementarity of representation and contribution preservation.
    Representation regularization reduces feature drift but leaves residual
    contribution mismatch, while adding CMCDR improves stage accuracy and better
    preserves the reference contribution profile.}
    \label{fig:rep_preservation_complementarity}
\end{figure}

\textbf{Complementarity Results (Q2).}
Table~\ref{tab:rep_preservation_complementarity} answers Q2. RepAlign and
CrossSDC reduce forgetting and MCD, but adding CMCDR yields a further
3.28--4.60 point accuracy gain, a 3.90--8.10 point forgetting reduction, and a
0.163--0.343 MCD reduction. These gains show that representation preservation
and contribution preservation address distinct, complementary failure modes.

\textbf{Class-wise Effect of CMCDR (Q3).}
Figure~\ref{fig:high_low_mcd_classes} contrasts classes with low and high
baseline MCD. Classes with larger baseline MCD exhibit stronger drift reduction
and accuracy preservation under CMCDR.

\begin{figure}[H]
    \centering
    \includegraphics[width=0.94\textwidth]{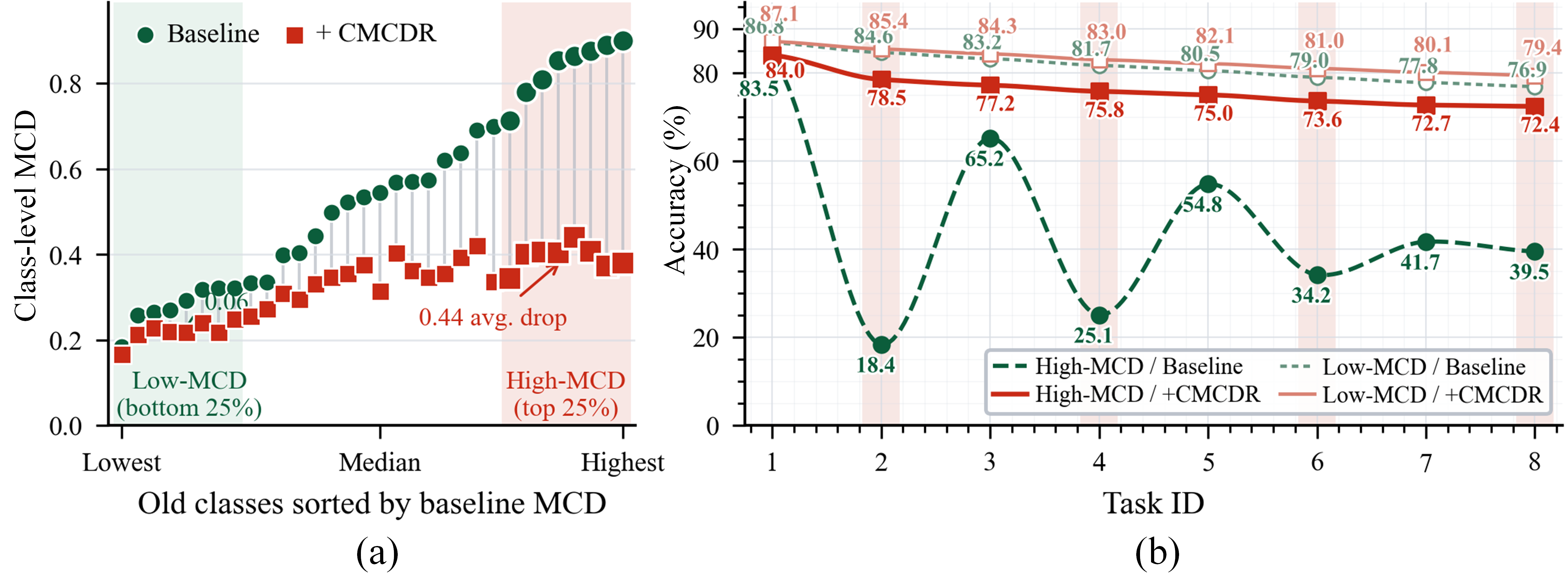}
    \caption{Class-wise effect of CMCDR. Classes with higher baseline MCD
    exhibit larger drift reductions and stronger accuracy preservation.}
    \label{fig:high_low_mcd_classes}
\end{figure}

\section{Conclusion}
This work identifies modality contribution instability as a distinct failure
mode in MMCL. MCD measures drift in contribution strength and relative
reliance; our analysis shows that representation stability is insufficient and
gives a fixed-candidate certificate for contribution-induced retention. CMCDR
preserves old-task profiles in replay-based and replay-free settings, improving
diverse recognition and QA baselines while complementing RepAlign and CrossSDC.
Diagnostics link lower MCD to stronger retention. These results establish
contribution stability as a complementary MMCL principle and motivate scalable
coalition approximations for many modalities.

\bibliography{references}
\bibliographystyle{paper}

\appendix

\clearpage
\pdfbookmark[1]{Appendix: Table of Contents}{appendix-toc}
\section*{Appendix: Table of Contents}
\vspace{0.4em}
\begingroup
\normalsize
\setlength{\parindent}{0pt}
\setlength{\parskip}{0pt}
\newcommand{\appoverviewsection}[2]{%
  \par\addvspace{0.65em}%
  \noindent
  \makebox[2.7em][l]{\hyperref[#1]{\textbf{\ref*{#1}}}}%
  \hyperref[#1]{\textbf{#2}}%
  \nobreak\hspace{0.35em}%
  \leaders\hbox to 0.55em{\hss.\hss}\hfill
  \nobreak\makebox[2em][r]{\hyperref[#1]{\pageref*{#1}}}\par}
\newcommand{\appoverviewsubsection}[2]{%
  \par\addvspace{0.18em}%
  \noindent\hspace*{2.7em}%
  \makebox[3.2em][l]{\hyperref[#1]{\ref*{#1}}}%
  \hyperref[#1]{#2}%
  \nobreak\hspace{0.35em}%
  \leaders\hbox to 0.55em{\hss.\hss}\hfill
  \nobreak\makebox[2em][r]{\hyperref[#1]{\pageref*{#1}}}\par}

\appoverviewsection{app:theory_proof}{Theoretical Analysis and Proofs}
\appoverviewsubsection{app:estimator_analysis}{Interventional Contribution Estimator}
\appoverviewsubsection{app:contribution_forgetting_proof}{Contribution Drift and Forgetting}
\appoverviewsubsection{app:later_class_competition}{Fixed-Reference Drift and Later-Class Competition}
\appoverviewsubsection{app:representation_insufficient_proof}{Why Representation Stability Is Insufficient}
\appoverviewsubsection{app:cmcdr_theory}{Theoretical Guarantees for CMCDR}

\appoverviewsection{app:experimental_details}{Experimental Settings and Implementation Details}
\appoverviewsubsection{app:experimental_settings}{Experimental Settings for Exploratory Analyses}
\appoverviewsubsection{app:modality_removal}{Modality-Removal Realizations}
\appoverviewsubsection{app:main_training_setup}{Main Training and Evaluation Setup}
\appoverviewsubsection{app:vqa_cmcdr}{CMCDR for Continual Multimodal Question Answering}

\appoverviewsection{app:additional_experiments}{Additional Experiments}
\appoverviewsubsection{app:efficiency_scalability}{Computational Efficiency and Scalability}
\appoverviewsubsection{app:contribution_ablation}{Ablation Studies}

\appoverviewsection{app:comparison_distinctiveness}{Relationship to Existing Methods}
\appoverviewsubsection{app:mmcl_relationship}{Multimodal Continual Learning}
\appoverviewsubsection{app:balanced_multimodal_relationship}{Balanced Multimodal Learning}
\appoverviewsubsection{app:modality_balancing_comparison}{Controlled Comparison with Modality-Balancing Methods}
\endgroup

\section{Theoretical Analysis and Proofs}
\label{app:theory_proof}

\subsection{Interventional Contribution Estimator}
\label{app:estimator_analysis}

This appendix analyzes why Eq.~\ref{eq:benefit_function} is a suitable outcome
valuation for model-interventional contribution estimation. For a fixed sample, we regard the
no-modality intervention $\emptyset$ as the control condition and a retained
modality subset $\mathcal{S}$ as the treatment condition. The contribution
estimator therefore needs a scalar outcome whose contrast
$B_i^{a,\mathcal{A}}(\mathcal{S})-B_i^{a,\mathcal{A}}(\emptyset)$ measures the
decision-level effect of introducing $\mathcal{S}$. This outcome should measure
the target class relative to competing classes, be invariant to common logit
shifts, and remain on an additive margin scale so that subset effects can be
decomposed into singleton and interaction terms.

The benefit function in Eq.~\ref{eq:benefit_function} satisfies these
requirements. For logits restricted to $\mathcal{A}$, define
\begin{equation}
p_{i,a}^{\mathcal{A}}(\mathcal{S})
=
\frac{\exp(z_{i,a}(\mathcal{S}))}
{\sum_{\ell\in\mathcal{A}}\exp(z_{i,\ell}(\mathcal{S}))}.
\end{equation}
Here $z_{i,a}(\mathcal{S})$ is the logit of class $a$, and
$z_{i,\ell}(\mathcal{S})$ denotes the logit of a candidate class
$\ell\in\mathcal{A}$.
Then
\begin{equation}
\begin{aligned}
B_i^{a,\mathcal{A}}(\mathcal{S})
&=
z_{i,a}(\mathcal{S})
-
\log\sum_{\ell\in\mathcal{A},\,\ell\neq a}
\exp(z_{i,\ell}(\mathcal{S}))  \\
&=
\log
\frac{\exp(z_{i,a}(\mathcal{S}))}
{\sum_{\ell\in\mathcal{A},\,\ell\neq a}
\exp(z_{i,\ell}(\mathcal{S}))}
=
\log
\frac{p_{i,a}^{\mathcal{A}}(\mathcal{S})}
{1-p_{i,a}^{\mathcal{A}}(\mathcal{S})}.
\end{aligned}
\end{equation}
Thus $B_i^{a,\mathcal{A}}(\mathcal{S})$ is the one-vs-rest log-odds of
target class $a$ under the intervention that retains $\mathcal{S}$.
Consequently, its difference from the no-modality intervention is
\begin{equation}
B_i^{a,\mathcal{A}}(\mathcal{S})
-
B_i^{a,\mathcal{A}}(\emptyset)
=
\log
\frac{
p_{i,a}^{\mathcal{A}}(\mathcal{S})/
\bigl(1-p_{i,a}^{\mathcal{A}}(\mathcal{S})\bigr)}
{
p_{i,a}^{\mathcal{A}}(\emptyset)/
\bigl(1-p_{i,a}^{\mathcal{A}}(\emptyset)\bigr)} .
\end{equation}
This is the log-odds ratio induced by retaining $\mathcal{S}$ relative to the
no-modality baseline, and therefore gives the total decision-level effect of
that subset. It is unchanged if all logits in $\mathcal{A}$ are shifted by the
same constant, and it avoids the saturation of probabilities by working in
log-odds space.
Moreover, if
\begin{equation}
\gamma_i^{a,\mathcal{A}}(\mathcal{S})
=
z_{i,a}(\mathcal{S})
-
\max_{\ell\in\mathcal{A},\,\ell\neq a}z_{i,\ell}(\mathcal{S}),
\end{equation}
then
\begin{equation}
\gamma_i^{a,\mathcal{A}}(\mathcal{S})-\log(|\mathcal{A}|-1)
\leq
B_i^{a,\mathcal{A}}(\mathcal{S})
\leq
\gamma_i^{a,\mathcal{A}}(\mathcal{S}).
\label{eq:benefit_margin_sandwich}
\end{equation}
Therefore the benefit is a smooth multiclass margin that accounts for all
competitors in $\mathcal{A}$ rather than only the largest competing logit. This
is why we use the log-sum-exp target-vs-rest margin instead of the raw target
logit or the target probability.

We next make the model intervention explicit. For a fixed sample $i$, let
$\mathcal I_i(\mathcal S)$ denote the deterministic controlled evaluation that
retains modalities in $\mathcal S$ using the removal realization specified in
Appendix~\ref{app:modality_removal}. Under fixed parameters $\theta$, its logit
response is
\begin{equation}
\mathbf z_i(\mathcal S)
=
F_{\theta}\!\left(\mathbf x_i^{\mathcal S}\right).
\end{equation}
Hence each $B_i^{a,\mathcal{A}}(\mathcal{S})$ is directly evaluated by one
controlled forward pass. This is an intervention on the model computation, not
an identification claim about the data-generating causal process.

Finally, we decompose this evaluated set function into coalition effects. A
singleton effect should measure the change from the no-modality response,
while a higher-order effect should retain only the residual effect that cannot
be explained by its lower-order subsets. This is exactly the M\"obius
coefficient on the subset lattice:
\begin{equation}
C_i^{a,\mathcal{A}}(\mathcal{T})
=
\sum_{\mathcal{S}\subseteq\mathcal{T}}
(-1)^{|\mathcal{T}|-|\mathcal{S}|}
B_i^{a,\mathcal{A}}(\mathcal{S}).
\end{equation}
For example, in the two-modality case,
\begin{equation}
C_i^{a,\mathcal{A}}(\{m,n\})
=
B_i^{a,\mathcal{A}}(\{m,n\})
-B_i^{a,\mathcal{A}}(\{m\})
-B_i^{a,\mathcal{A}}(\{n\})
+B_i^{a,\mathcal{A}}(\emptyset),
\end{equation}
which isolates the interaction beyond the two singleton effects. By M\"obius
inversion,
\begin{equation}
B_i^{a,\mathcal{A}}(\mathcal{S})
=
\sum_{\mathcal{T}\subseteq\mathcal{S}}
C_i^{a,\mathcal{A}}(\mathcal{T}),
\qquad
C_i^{a,\mathcal{A}}(\emptyset)=B_i^{a,\mathcal{A}}(\emptyset).
\end{equation}
Taking $\mathcal{S}=\mathcal{M}$ and removing the baseline term gives
\begin{equation}
\sum_{\emptyset\neq\mathcal{T}\subseteq\mathcal{M}}
C_i^{a,\mathcal{A}}(\mathcal{T})
=
B_i^{a,\mathcal{A}}(\mathcal{M})
-B_i^{a,\mathcal{A}}(\emptyset),
\end{equation}
which proves that the estimated singleton and interaction contributions
exactly explain the full-input benefit over the no-modality response.

\subsection{Contribution Drift and Forgetting}
\label{app:contribution_forgetting_proof}

\begin{restatedproposition}
\textbf{Proposition~\ref{prop:contribution_margin_loss}.}
\textit{Contribution drift can cross the retention margin.}
For an old sample $i$, let $\tau(y_i)\leq r<t$, fix a candidate set
$\mathcal A_i\ni y_i$ across stages $r$ and $t$, and define
$\bar B_{i,s}(\mathcal S)=B_{i,s}(\mathcal S)-B_{i,s}(\emptyset)$,
$m^{\mathrm{keep}}_{i,r\to t}
=B_{i,t}(\emptyset)+\bar B_{i,r}(\mathcal M)$, and
$\Delta^{\mathrm{contrib}}_{i,r\to t}
=\bar B_{i,t}(\mathcal M)-\bar B_{i,r}(\mathcal M)$.
Let $K_i=|\mathcal A_i|\geq2$ and let $\gamma_{i,t}$ denote the target-class
logit margin over its strongest competitor at stage $t$. Then
\begin{equation*}
m^{\mathrm{keep}}_{i,r\to t}+\Delta^{\mathrm{contrib}}_{i,r\to t}
\leq \gamma_{i,t}\leq
m^{\mathrm{keep}}_{i,r\to t}+\Delta^{\mathrm{contrib}}_{i,r\to t}
+\log(K_i-1).
\end{equation*}
If $\Delta^{\mathrm{contrib}}_{i,r\to t}
>-m^{\mathrm{keep}}_{i,r\to t}$, class $y_i$ remains the strict prediction over
$\mathcal A_i$. If
$\Delta^{\mathrm{contrib}}_{i,r\to t}
\leq-m^{\mathrm{keep}}_{i,r\to t}-\log(K_i-1)$,
the contribution shift is sufficient to remove that prediction. Conversely,
whenever a contribution change turns a counterfactually retained prediction
($m^{\mathrm{keep}}_{i,r\to t}>0$) into a forgotten one, it must satisfy
$\Delta^{\mathrm{contrib}}_{i,r\to t}
\leq-m^{\mathrm{keep}}_{i,r\to t}$.
\end{restatedproposition}

\begin{proof}
Fix the sample, stages, and candidate set in the proposition. For
$s\in\{r,t\}$, suppress the fixed
superscript $(y_i,\mathcal A_i)$ and define
\begin{equation}
\bar B_{i,s}(\mathcal S)
=B_{i,s}(\mathcal S)-B_{i,s}(\emptyset).
\label{eq:appendix_centered_subset_response}
\end{equation}
By the definitions of $m^{\mathrm{keep}}_{i,r\to t}$ and
$\Delta^{\mathrm{contrib}}_{i,r\to t}$,
\begin{equation}
B_{i,t}(\mathcal M)
=m^{\mathrm{keep}}_{i,r\to t}
+\Delta^{\mathrm{contrib}}_{i,r\to t}.
\label{eq:appendix_contribution_margin_identity}
\end{equation}
Let $K_i=|\mathcal A_i|$ and define the target-class logit margin
$\gamma_{i,t}=z_{i,y_i,t}(\mathcal M)-
\max_{\ell\in\mathcal A_i,\ell\neq y_i}z_{i,\ell,t}(\mathcal M)$.
Equation~\ref{eq:benefit_margin_sandwich} gives
\begin{equation}
B_{i,t}(\mathcal M)\leq\gamma_{i,t}\leq
B_{i,t}(\mathcal M)+\log(K_i-1).
\label{eq:appendix_contribution_margin_sandwich}
\end{equation}
Substituting Eq.~\ref{eq:appendix_contribution_margin_identity} into
Eq.~\ref{eq:appendix_contribution_margin_sandwich} proves
Eq.~\ref{eq:contribution_margin_bound}.

If $\Delta^{\mathrm{contrib}}_{i,r\to t}
>-m^{\mathrm{keep}}_{i,r\to t}$, then
$B_{i,t}(\mathcal M)>0$ and hence $\gamma_{i,t}>0$, so the target remains the
strict prediction. If
$\Delta^{\mathrm{contrib}}_{i,r\to t}
\leq-m^{\mathrm{keep}}_{i,r\to t}-\log(K_i-1)$,
then $\gamma_{i,t}\leq0$, so the target is no longer the strict prediction.
Finally, if the counterfactual retains the target but the updated model forgets
it, then $m^{\mathrm{keep}}_{i,r\to t}>0$ and $\gamma_{i,t}\leq0$. The lower bound in
Eq.~\ref{eq:appendix_contribution_margin_sandwich} implies
$m^{\mathrm{keep}}_{i,r\to t}
+\Delta^{\mathrm{contrib}}_{i,r\to t}\leq0$, or
$\Delta^{\mathrm{contrib}}_{i,r\to t}
\leq-m^{\mathrm{keep}}_{i,r\to t}$, completing the
proof.
\end{proof}

\subsection{Fixed-Reference Drift and Later-Class Competition}
\label{app:later_class_competition}

MCD is intentionally a mechanism-specific diagnostic rather than an
exhaustive measure of class-incremental forgetting. For an old target $a$, let
$\mathcal A$ be its reference candidate set and
$\mathcal N_t=\mathcal Y_{\leq t}\setminus\mathcal A$ the later classes. Write
$Z_{\mathcal A,t}(\mathcal S)=\sum_{\ell\in\mathcal A\setminus\{a\}}
\exp z_{i,\ell,t}(\mathcal S)$ and define $Z_{\mathcal N,t}$ analogously.
For the centered benefit
$\bar B(\mathcal S)=B(\mathcal S)-B(\emptyset)$,
\begin{equation}
\bar B_{i,t}^{a,\mathcal A\cup\mathcal N_t}(\mathcal S)
=
\bar B_{i,t}^{a,\mathcal A}(\mathcal S)
-
\left[
q_{i,t}(\mathcal S)-q_{i,t}(\emptyset)
\right],
\quad
q_{i,t}(\mathcal S)
=
\log\!\left(1+\frac{Z_{\mathcal N,t}(\mathcal S)}
{Z_{\mathcal A,t}(\mathcal S)}\right).
\label{eq:later_class_competition_decomposition}
\end{equation}
Thus adding later classes changes the coalition values whenever their relative
logit mass depends on the retained modalities. A dynamic candidate set would
therefore mix drift of the established target-versus-reference mechanism with
output-space expansion; the latter can change even when every logit in
$\mathcal A$ is unchanged, weakening the diagnostic interpretation.

Accordingly, diagnostic MCD fixes
$\mathcal A_y=\mathcal Y_{\leq\tau(y)}$. At each CMCDR step, the current and
frozen models likewise share the same old-class set $\mathcal Y_{<t}$, while the
base continual-learning objective trains the expanded classifier. Later-class
learning can still affect MCD indirectly by changing the target and
reference-competitor logits, whereas standard full-space accuracy and forgetting
also include direct later-class competition. The certificate in
Proposition~\ref{prop:contribution_margin_loss} consequently applies to
contribution-induced retention within a fixed candidate set and does not claim
to characterize all sources of class-incremental forgetting.

\subsection{Why Representation Stability Is Insufficient}
\label{app:representation_insufficient_proof}
\label{sub:proof_of_prop_representation_insufficient}

\begin{restatedproposition}
\textbf{Proposition~\ref{prop:representation_insufficient}.}
\textit{Representation stability is insufficient for contribution stability.}
For a sample $i$ with old-class target $a$ and stages $\tau(a)\leq r<t$, fix
$\mathcal{A}=\mathcal{A}_a$. Let $\mathbf u_{i,s}(\mathcal S)$ denote the
modality-subset representation at stage $s$ in a common normed space, and let
$g_s^{a,\mathcal A}$ be
the corresponding target-vs-rest margin map, so that
$B_{i,s}^{a,\mathcal A}(\mathcal S)
=g_s^{a,\mathcal A}(\mathbf u_{i,s}(\mathcal S))$. Define
\begin{equation*}
\begin{aligned}
\Delta_h(\mathcal S)
&=\|\mathbf u_{i,t}(\mathcal S)-\mathbf u_{i,r}(\mathcal S)\|,\\
\Delta_g(\mathcal S)
&=\left|g_t^{a,\mathcal A}(\mathbf u_{i,r}(\mathcal S))
-g_r^{a,\mathcal A}(\mathbf u_{i,r}(\mathcal S))\right|.
\end{aligned}
\end{equation*}
If $g_t^{a,\mathcal A}$ is $L_g$-Lipschitz with respect to this norm, then
for any nonempty $\mathcal T\subseteq\mathcal M$,
\begin{equation*}
\left|
C_{i,t}^{a,\mathcal A}(\mathcal T)
-C_{i,r}^{a,\mathcal A}(\mathcal T)
\right|
\leq
\sum_{\mathcal S\subseteq\mathcal T}
\left(L_g\Delta_h(\mathcal S)+\Delta_g(\mathcal S)\right).
\end{equation*}
Let $\bar B_{i,s}^{a,\mathcal A}(\mathcal S)
=B_{i,s}^{a,\mathcal A}(\mathcal S)-B_{i,s}^{a,\mathcal A}(\emptyset)$.
If every nonempty $\mathcal S\subseteq\mathcal T$ satisfies
$|\bar B_{i,t}^{a,\mathcal A}(\mathcal S)
-\bar B_{i,r}^{a,\mathcal A}(\mathcal S)|\leq\epsilon_{\bar B}$, then
$|C_{i,t}^{a,\mathcal A}(\mathcal T)-C_{i,r}^{a,\mathcal A}(\mathcal T)|
\leq(2^{|\mathcal T|}-1)\epsilon_{\bar B}$.
Moreover, there exist stage-invariant modality-subset representations and two
decision maps such that $\Delta_h(\mathcal S)=0$ for every $\mathcal S$, while
$C_{i,t}^{a,\mathcal A}(\mathcal T)\neq C_{i,r}^{a,\mathcal A}(\mathcal T)$
for some nonempty $\mathcal T$.
\end{restatedproposition}

\begin{proof}
Fix the quantities in Proposition~\ref{prop:representation_insufficient}.
For any $\mathcal{S}\subseteq\mathcal{T}$, the Lipschitz condition gives
\begin{equation}
\begin{aligned}
\left|
B_{i,t}^{a,\mathcal{A}}(\mathcal{S})
-
B_{i,r}^{a,\mathcal{A}}(\mathcal{S})
\right|
&=
\left|
g_t^{a,\mathcal{A}}(\mathbf{u}_{i,t}(\mathcal{S}))
-
g_r^{a,\mathcal{A}}(\mathbf{u}_{i,r}(\mathcal{S}))
\right| \\
&\leq
\left|
g_t^{a,\mathcal{A}}(\mathbf{u}_{i,t}(\mathcal{S}))
-
g_t^{a,\mathcal{A}}(\mathbf{u}_{i,r}(\mathcal{S}))
\right|
+
\left|
g_t^{a,\mathcal{A}}(\mathbf{u}_{i,r}(\mathcal{S}))
-
g_r^{a,\mathcal{A}}(\mathbf{u}_{i,r}(\mathcal{S}))
\right| \\
&\leq
L_g\Delta_h(\mathcal{S})+\Delta_g(\mathcal{S}).
\end{aligned}
\end{equation}
Applying Eq.~\ref{eq:general_contribution} and the triangle inequality,
\begin{equation}
\begin{aligned}
\left|
C_{i,t}^{a,\mathcal{A}}(\mathcal{T})
-
C_{i,r}^{a,\mathcal{A}}(\mathcal{T})
\right|
&\leq
\sum_{\mathcal{S}\subseteq\mathcal{T}}
\left|
B_{i,t}^{a,\mathcal{A}}(\mathcal{S})
-
B_{i,r}^{a,\mathcal{A}}(\mathcal{S})
\right| \\
&\leq
\sum_{\mathcal{S}\subseteq\mathcal{T}}
\left(
L_g\Delta_h(\mathcal{S})+\Delta_g(\mathcal{S})
\right),
\end{aligned}
\end{equation}
which proves Eq.~\ref{eq:representation_to_contribution_bound}.

We next prove the centered-response sufficient condition. For any nonempty
coalition $\mathcal{T}$, Eq.~\ref{eq:general_contribution} gives
\begin{equation}
\begin{aligned}
C_{i,s}^{a,\mathcal{A}}(\mathcal{T})
=
\sum_{\mathcal{S}\subseteq\mathcal{T}}
(-1)^{|\mathcal{T}|-|\mathcal{S}|}
B_{i,s}^{a,\mathcal{A}}(\mathcal{S}) =
\sum_{\mathcal{S}\subseteq\mathcal{T}}
(-1)^{|\mathcal{T}|-|\mathcal{S}|}
\bar{B}_{i,s}^{a,\mathcal{A}}(\mathcal{S}),
\end{aligned}
\label{eq:centered_mobius_appendix}
\end{equation}
because
$\sum_{\mathcal{S}\subseteq\mathcal{T}}
(-1)^{|\mathcal{T}|-|\mathcal{S}|}=(1-1)^{|\mathcal{T}|}=0$.
Under the centered-response condition in
Proposition~\ref{prop:representation_insufficient},
\begin{equation}
\left|
\bar{B}_{i,t}^{a,\mathcal{A}}(\mathcal{S})
-
\bar{B}_{i,r}^{a,\mathcal{A}}(\mathcal{S})
\right|
\leq
\epsilon_{\bar B}
\qquad
\forall\,\emptyset\neq\mathcal{S}\subseteq\mathcal{T}.
\end{equation}
Eq.~\ref{eq:centered_mobius_appendix} and the triangle inequality give
\begin{equation}
\begin{aligned}
\left|
C_{i,t}^{a,\mathcal{A}}(\mathcal{T})
-
C_{i,r}^{a,\mathcal{A}}(\mathcal{T})
\right|
&\leq
\sum_{\emptyset\neq\mathcal{S}\subseteq\mathcal{T}}
\left|
\bar{B}_{i,t}^{a,\mathcal{A}}(\mathcal{S})
-
\bar{B}_{i,r}^{a,\mathcal{A}}(\mathcal{S})
\right| \\
&\leq
(2^{|\mathcal{T}|}-1)\epsilon_{\bar B},
\end{aligned}
\label{eq:benefit_to_contribution_bound}
\end{equation}
which proves Eq.~\ref{eq:benefit_to_contribution_bound}.

It remains to prove the separation from representation stability. Consider two
modalities and a binary linear classifier with $|\mathcal A|=2$, for which the
target-vs-rest margin is linear in the representation. Suppress the fixed superscript
$(a,\mathcal{A})$ and let
\begin{equation}
B_{i,s}(\mathcal{S})=(w_s)^\top \mathbf{u}_i(\mathcal{S}),
\end{equation}
with exact representation preservation across stages:
$\mathbf{u}_{i,t}(\mathcal{S})=\mathbf{u}_{i,r}(\mathcal{S})=\mathbf{u}_i(\mathcal{S})$ for every
$\mathcal{S}$. Set
$\mathbf{u}_i(\emptyset)=0$, $\mathbf{u}_i(\{1\})=(1,0)^\top$,
$\mathbf{u}_i(\{2\})=(0,1)^\top$, and
$\mathbf{u}_i(\{1,2\})=(1,1)^\top$. Choose $w_r=(0,0)^\top$ and
$w_t=(1,-1)^\top$. Then
\begin{equation}
\bar{B}_{i,r}(\{1\})=\bar{B}_{i,r}(\{2\})
=\bar{B}_{i,r}(\{1,2\})=0,
\end{equation}
but
\begin{equation}
\bar{B}_{i,t}(\{1\})=1,\qquad
\bar{B}_{i,t}(\{2\})=-1,\qquad
\bar{B}_{i,t}(\{1,2\})=0.
\end{equation}
Thus the representations and the full-modality centered response are unchanged,
yet the singleton contributions change:
$C_{i,t}(\{1\})=1$ and $C_{i,t}(\{2\})=-1$, while
$C_{i,r}(\{1\})=C_{i,r}(\{2\})=0$. Hence contribution stability is not a
consequence of representation stability. Stable centered subset responses or
constraints on the decision rule provide sufficient mechanisms for controlling
contribution drift.
\end{proof}

\subsection{Theoretical Guarantees for CMCDR}
\label{app:cmcdr_theory}

Proposition~\ref{prop:representation_insufficient} motivates direct
contribution control because representation constraints leave decision-map
drift unconstrained. We first prove that the CMCDR loss bounds contribution
drift on the profiles used by either objective. For replay, combining this bound
with Proposition~\ref{prop:contribution_margin_loss} yields a retention
certificate for each replayed old sample that satisfies the bound. For
replay-free learning, control on
current probes transfers to the old-data distribution only under an explicit
coverage condition. Proposition~\ref{prop:contribution_margin_loss} then yields
a forgetting bound. Fix a sample $i$, old-class
output index $a$, reference stage $r$, and current stage $t>r$. Let
$\mathcal A\ni a$, $|\mathcal A|\geq2$, be fixed for the two compared models,
and use the same ordering of nonempty modality coalitions at both stages. This
condition covers each current--frozen comparison in Sec.~\ref{sec:method}, where
$r=t-1$ and $\mathcal A=\mathcal Y_{<t}$ is held fixed within the comparison.
The same condition covers the diagnostic choice $\mathcal A=\mathcal A_a$.

MCD in Eq.~\ref{eq:mcd} is a task-balanced diagnostic over a
reference probe bank that is not fully available during continual training.
CMCDR therefore does not reconstruct or directly optimize that diagnostic. We
instead analyze the per-profile MCD surrogate induced by the replay
samples or current-task probes accessible to the corresponding objective.

Let $D=2^{|\mathcal M|}-1$ and define
\begin{equation}
\begin{aligned}
\mathbf c_s
&=
\left[C_{i,s}^{a,\mathcal A}(\mathcal T)
\right]_{\emptyset\neq\mathcal T\subseteq\mathcal M},
&
s_s&=\|\mathbf c_s\|_1,\\
\widehat{\mathbf c}_s
&=\frac{\mathbf c_s}{s_s+\epsilon},
&
s&\in\{r,t\},
\end{aligned}
\label{eq:cmcdr_theory_profiles}
\end{equation}
where $\epsilon>0$. The profile-level counterparts of the drift terms in
Eq.~\ref{eq:mcd} are
\begin{equation}
\begin{aligned}
d^{\mathrm{abs}}_{i,r\to t}
&=\|\mathbf c_t-\mathbf c_r\|_1,\\
d^{\mathrm{rel}}_{i,r\to t}
&=\|\widehat{\mathbf c}_t-\widehat{\mathbf c}_r\|_1,\\
d^{\mathrm{MCD}}_{i,r\to t}
&=|s_t-s_r|+
(\min(s_t,s_r)+\epsilon)d^{\mathrm{rel}}_{i,r\to t}.
\end{aligned}
\label{eq:cmcdr_theory_drift_terms}
\end{equation}

For completeness, write the mean Smooth-$L_1$ loss in
Eq.~\ref{eq:cmcdr_profile_distance} as
\begin{equation}
\rho_\delta(\mathbf u,\mathbf v)
=\frac{1}{D}\sum_{j=1}^{D}h_\delta(u_j-v_j),
\qquad
h_\delta(x)=
\begin{cases}
\dfrac{x^2}{2\delta}, & |x|\leq\delta,\\[2pt]
|x|-\dfrac{\delta}{2}, & |x|>\delta,
\end{cases}
\label{eq:cmcdr_smooth_l1_definition}
\end{equation}
with transition $\delta>0$. The per-profile CMCDR discrepancy is
\begin{equation}
\ell_{i,r\to t}
=
\rho_\delta(\mathbf c_t,\mathbf c_r)
+\beta\rho_\delta(
\widehat{\mathbf c}_t,\widehat{\mathbf c}_r),
\qquad \beta>0.
\label{eq:cmcdr_per_profile_loss}
\end{equation}
The stop-gradient operator in Eq.~\ref{eq:cmcdr_profile_distance} affects
optimization but not the numerical value of this discrepancy.

\begin{lemma}[CMCDR controls the per-profile MCD surrogate]
\label{lem:cmcdr_controls_mcd}
Let $R_{i,r\to t}=\min(s_t,s_r)+\epsilon$ and define
$\Psi_\delta(u)=\sqrt{2\delta u}+u$ for $u\geq0$. Then
\begin{equation}
d^{\mathrm{MCD}}_{i,r\to t}
\leq
D\left[
\Psi_\delta(\ell_{i,r\to t})
+R_{i,r\to t}
\Psi_\delta\!\left(\frac{\ell_{i,r\to t}}{\beta}\right)
\right].
\label{eq:cmcdr_loss_to_mcd_bound}
\end{equation}
In particular, $\ell_{i,r\to t}=0$ implies
$d^{\mathrm{MCD}}_{i,r\to t}=0$.

Moreover, let $\pi$ be any probability weighting over a finite collection of
profile comparisons, assume $R_{i,r\to t}\leq R_{\max}$, and write
$\bar\ell=\mathbb E_\pi[\ell_{i,r\to t}]$. Then
\begin{equation}
\mathbb E_\pi[d^{\mathrm{MCD}}_{i,r\to t}]
\leq
D\left[
\Psi_\delta(\bar\ell)
+R_{\max}\Psi_\delta\!\left(\frac{\bar\ell}{\beta}\right)
\right].
\label{eq:cmcdr_average_loss_to_mcd_bound}
\end{equation}
\end{lemma}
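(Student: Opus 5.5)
The plan is to invert the Smooth-$L_1$ loss coordinate by coordinate, then aggregate over the $D$ coalitions with Jensen's inequality, using that $\Psi_\delta$ is concave and nondecreasing. Finally, the two pieces of $d^{\mathrm{MCD}}_{i,r\to t}$ are bounded separately: the absolute piece through $\rho_\delta(\mathbf c_t,\mathbf c_r)$ and the relative piece through $\rho_\delta(\widehat{\mathbf c}_t,\widehat{\mathbf c}_r)$.

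\emph{Step 1 (scalar inversion).} I first show that $|x|\le\Psi_\delta(h_\delta(x))$ for every real $x$.
\begin{itemize}
\item If $|x|\le\delta$, then $h_\delta(x)=x^2/(2\delta)$, which gives $|x|=\sqrt{2\delta h_\delta(x)}\le\Psi_\delta(h_\delta(x))$.
\item If $|x|>\delta$, then $u:=h_\delta(x)=|x|-\delta/2>\delta/2$. Hence $\sqrt{2\delta u}>\delta\ge\delta/2$, so $|x|=u+\delta/2\le u+\sqrt{2\delta u}=\Psi_\delta(u)$.
\end{itemize}
This case split is the only genuinely delicate point; the second case relies on the observation that $u>\delta/2$ forces $\sqrt{2\delta u}$ to exceed $\delta/2$. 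I also record that $\Psi_\delta$ is nondecreasing and concave on $[0,\infty)$, being the sum of $\sqrt{2\delta u}$ and a linear term.

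\emph{Step 2 (vector bound).} Apply Step 1 to each coordinate $x_j$ of $\mathbf c_t-\mathbf c_r$ and use concavity (Jensen over the uniform average of the $D$ coordinates):
\[
\|\mathbf c_t-\mathbf c_r\|_1\le\sum_{j=1}^{D}\Psi_\delta(h_\delta(x_j))\le D\,\Psi_\delta\bigl(\rho_\delta(\mathbf c_t,\mathbf c_r)\bigr)\le D\,\Psi_\delta(\ell_{i,r\to t}).
\]
The last inequality uses $\rho_\delta(\mathbf c_t,\mathbf c_r)\le\ell_{i,r\to t}$, which holds because $\beta\rho_\delta\ge0$, together with monotonicity of $\Psi_\delta$.

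In the same way, $\beta\rho_\delta(\widehat{\mathbf c}_t,\widehat{\mathbf c}_r)\le\ell_{i,r\to t}$ gives
\[
d^{\mathrm{rel}}_{i,r\to t}\le D\,\Psi_\delta(\ell_{i,r\to t}/\beta).
\]
For the strength term, the reverse triangle inequality gives $|s_t-s_r|\le\|\mathbf c_t-\mathbf c_r\|_1$. Multiplying the relative bound by $R_{i,r\to t}$ and adding yields Eq.~\ref{eq:cmcdr_loss_to_mcd_bound}. The zero case follows from $\Psi_\delta(0)=0$.

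\emph{Step 3 (averaged bound).} Replace $R_{i,r\to t}$ by $R_{\max}$ in the per-profile bound. Then take $\mathbb E_\pi$ and apply Jensen's inequality once more to the concave function $\Psi_\delta$, so that $\mathbb E_\pi[\Psi_\delta(\ell)]\le\Psi_\delta(\bar\ell)$. Since $\ell\mapsto\ell/\beta$ is linear, the same argument gives $\mathbb E_\pi[\Psi_\delta(\ell/\beta)]\le\Psi_\delta(\bar\ell/\beta)$. Together these give Eq.~\ref{eq:cmcdr_average_loss_to_mcd_bound}.

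I expect no real obstacle beyond the case analysis in Step 1.
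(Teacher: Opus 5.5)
Your proof is correct and follows the paper's argument step for step. It uses the same scalar inversion $|x|\le\sqrt{2\delta h_\delta(x)}+h_\delta(x)$ with the same case split, the same component bounds via $\rho_\delta(\mathbf c_t,\mathbf c_r)\le\ell_{i,r\to t}$ and $\rho_\delta(\widehat{\mathbf c}_t,\widehat{\mathbf c}_r)\le\ell_{i,r\to t}/\beta$, the reverse triangle inequality for $|s_t-s_r|$, and Jensen for the averaged bound. The only cosmetic difference is that you aggregate over the $D$ coordinates by Jensen on the concave $\Psi_\delta$, whereas the paper applies Cauchy--Schwarz to the square-root terms; both give the same bound $D\,\Psi_\delta(\rho_\delta)$.
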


\begin{proof}
For every scalar $x$, Eq.~\ref{eq:cmcdr_smooth_l1_definition} gives
\begin{equation}
|x|\leq \sqrt{2\delta h_\delta(x)}+h_\delta(x).
\label{eq:smooth_l1_scalar_control}
\end{equation}
For $|x|\leq\delta$, the square-root term equals $|x|$. For
$|x|>\delta$, the claim follows from
$h_\delta(x)=|x|-\delta/2\geq\delta/2$. Summing
Eq.~\ref{eq:smooth_l1_scalar_control} over coordinates and applying
Cauchy--Schwarz yields
\begin{equation}
\|\mathbf u-\mathbf v\|_1
\leq
D\Psi_\delta\!\left(\rho_\delta(\mathbf u,\mathbf v)\right).
\label{eq:smooth_l1_vector_control}
\end{equation}
Since the two nonnegative terms in Eq.~\ref{eq:cmcdr_per_profile_loss}
satisfy
\begin{equation}
\rho_\delta(\mathbf c_t,\mathbf c_r)\leq\ell_{i,r\to t},
\qquad
\rho_\delta(\widehat{\mathbf c}_t,\widehat{\mathbf c}_r)
\leq\frac{\ell_{i,r\to t}}{\beta},
\end{equation}
Eq.~\ref{eq:smooth_l1_vector_control} gives
\begin{equation}
d^{\mathrm{abs}}_{i,r\to t}
\leq D\Psi_\delta(\ell_{i,r\to t}),
\qquad
d^{\mathrm{rel}}_{i,r\to t}
\leq D\Psi_\delta\!\left(\frac{\ell_{i,r\to t}}{\beta}\right).
\label{eq:cmcdr_component_bounds}
\end{equation}
The reverse triangle inequality implies
$|s_t-s_r|\leq d^{\mathrm{abs}}_{i,r\to t}$. Substitution into
Eq.~\ref{eq:cmcdr_theory_drift_terms} proves
Eq.~\ref{eq:cmcdr_loss_to_mcd_bound}.

Finally, $\Psi_\delta$ is nondecreasing and concave on $[0,\infty)$.
Taking the $\pi$-weighted expectation, using
$R_{i,r\to t}\leq R_{\max}$, and applying Jensen's inequality proves
Eq.~\ref{eq:cmcdr_average_loss_to_mcd_bound}.
\end{proof}

Lemma~\ref{lem:cmcdr_controls_mcd} gives the profile-level guarantee
shared by both variants. It establishes the method-to-drift link because a small
CMCDR discrepancy guarantees a small MCD surrogate on the matched profile.
Proposition~\ref{prop:representation_insufficient} shows why this direct control
is needed. It constrains profile changes regardless of whether they originate
in the representation map or the decision map.

\paragraph{From contribution drift to forgetting.}
Proposition~\ref{prop:contribution_margin_loss} is stated in terms of the
aggregate contribution shift $\Delta^{\mathrm{contrib}}$, whereas
Lemma~\ref{lem:cmcdr_controls_mcd} bounds $d^{\mathrm{MCD}}$. The next
lemma connects these quantities. The replay-based and replay-free consequences
then follow by applying Proposition~\ref{prop:contribution_margin_loss}.

\begin{lemma}[Per-profile MCD controls the aggregate response shift]
\label{lem:mcd_controls_contribution_shift}
Under the fixed candidate set above,
\begin{equation}
\Delta^{\mathrm{contrib}}_{i,r\to t}
:=
\bar B_{i,t}^{a,\mathcal A}(\mathcal M)
-\bar B_{i,r}^{a,\mathcal A}(\mathcal M)
=
\mathbf 1^\top(\mathbf c_t-\mathbf c_r),
\label{eq:aggregate_contribution_shift_identity}
\end{equation}
and
\begin{equation}
|\Delta^{\mathrm{contrib}}_{i,r\to t}|
\leq
d^{\mathrm{abs}}_{i,r\to t}
\leq
d^{\mathrm{MCD}}_{i,r\to t}.
\label{eq:aggregate_shift_mcd_bound}
\end{equation}
\end{lemma}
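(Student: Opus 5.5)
The plan is to prove the identity~\eqref{eq:aggregate_contribution_shift_identity} from M\"obius completeness, and then to obtain the two inequalities in~\eqref{eq:aggregate_shift_mcd_bound} separately. The first inequality is elementary. The second requires decomposing the raw profile difference into a scale part and a direction part.

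\textbf{Step 1 (identity).} For each stage $s\in\{r,t\}$, I apply the completeness relation Eq.~\ref{eq:contribution_completeness} with the fixed pair $(a,\mathcal A)$. This gives $\mathbf 1^\top\mathbf c_s=\sum_{\emptyset\neq\mathcal T\subseteq\mathcal M}C_{i,s}^{a,\mathcal A}(\mathcal T)=B_{i,s}^{a,\mathcal A}(\mathcal M)-B_{i,s}^{a,\mathcal A}(\emptyset)=\bar B_{i,s}^{a,\mathcal A}(\mathcal M)$. Subtracting the $s=r$ relation from the $s=t$ relation yields~\eqref{eq:aggregate_contribution_shift_identity}. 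This step needs the candidate set and the coalition ordering to be the same at both stages, which is exactly the standing assumption.

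\textbf{Step 2 (first inequality).} Since $|\mathbf 1^\top\mathbf v|\leq\sum_j|v_j|=\|\mathbf v\|_1$, taking $\mathbf v=\mathbf c_t-\mathbf c_r$ gives $|\Delta^{\mathrm{contrib}}_{i,r\to t}|\leq d^{\mathrm{abs}}_{i,r\to t}$.

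\textbf{Step 3 (second inequality, the main step).} I must show $\|\mathbf c_t-\mathbf c_r\|_1\leq|s_t-s_r|+(\min(s_t,s_r)+\epsilon)\,\|\widehat{\mathbf c}_t-\widehat{\mathbf c}_r\|_1$. By definition, $\mathbf c_s=(s_s+\epsilon)\widehat{\mathbf c}_s$. Suppose first that $s_t\geq s_r$. Then I would write
\begin{equation*}
\mathbf c_t-\mathbf c_r=(s_t-s_r)\,\widehat{\mathbf c}_t+(s_r+\epsilon)\,(\widehat{\mathbf c}_t-\widehat{\mathbf c}_r).
\end{equation*}
The triangle inequality bounds the $\ell_1$ norm by $|s_t-s_r|\,\|\widehat{\mathbf c}_t\|_1+(s_r+\epsilon)\,d^{\mathrm{rel}}_{i,r\to t}$. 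The key observation is that $\|\widehat{\mathbf c}_t\|_1=s_t/(s_t+\epsilon)\leq1$, and in this case $s_r=\min(s_t,s_r)$. Together these give exactly $d^{\mathrm{MCD}}_{i,r\to t}$.

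In the case $s_r>s_t$, I would use the symmetric split
\begin{equation*}
\mathbf c_t-\mathbf c_r=(s_t-s_r)\,\widehat{\mathbf c}_r+(s_t+\epsilon)\,(\widehat{\mathbf c}_t-\widehat{\mathbf c}_r),
\end{equation*}
together with $\|\widehat{\mathbf c}_r\|_1\leq1$. This reaches the same bound.

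The only real subtlety is this choice of split. The anchor must be the normalized profile of the \emph{larger}-scale stage, so that the coefficient on the relative drift is $\min(s_t,s_r)+\epsilon$ and not the maximum. The strict positivity of $\epsilon$ guarantees that all normalizations are well defined, even when a profile vanishes.

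Chaining Steps 2 and 3 gives~\eqref{eq:aggregate_shift_mcd_bound}.
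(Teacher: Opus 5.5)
Your proposal is correct and matches the paper's proof step for step. Like the paper, you use Möbius completeness at both stages for the identity, the $\ell_1$ triangle inequality for the first bound, and the same split $\mathbf c_t-\mathbf c_r=(s_t-s_r)\widehat{\mathbf c}_t+(s_r+\epsilon)(\widehat{\mathbf c}_t-\widehat{\mathbf c}_r)$ with $\|\widehat{\mathbf c}_t\|_1\le 1$ for the second, and your explicit anchoring at $\widehat{\mathbf c}_r$ when $s_r>s_t$ is exactly the case the paper dismisses as ``symmetric.''
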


\begin{proof}
Equation~\ref{eq:aggregate_contribution_shift_identity} follows by applying
Eq.~\ref{eq:contribution_completeness} at stages $r$ and $t$ and subtracting.
The first inequality in Eq.~\ref{eq:aggregate_shift_mcd_bound} is the triangle
inequality. For the second, suppose without loss of generality that $s_t\geq
s_r$. Since $\mathbf c_s=(s_s+\epsilon)\widehat{\mathbf c}_s$,
\begin{equation}
\mathbf c_t-\mathbf c_r
=
(s_t-s_r)\widehat{\mathbf c}_t
+(s_r+\epsilon)
(\widehat{\mathbf c}_t-\widehat{\mathbf c}_r).
\end{equation}
Because $\|\widehat{\mathbf c}_t\|_1=s_t/(s_t+\epsilon)\leq1$,
taking the $L_1$ norm proves
$d^{\mathrm{abs}}_{i,r\to t}\leq d^{\mathrm{MCD}}_{i,r\to t}$.
The case $s_r>s_t$ is symmetric.
\end{proof}

\paragraph{Replay-based retention.}
Let $\pi$ be the uniform empirical weighting over sampled old profiles. Under
the bounded-$R$ condition in Lemma~\ref{lem:cmcdr_controls_mcd},
$\bar\ell=\mathcal L^{\mathrm{rep}}_{\mathrm{cmcdr}}$, so
Eq.~\ref{eq:cmcdr_average_loss_to_mcd_bound} controls their average per-profile
MCD. For an individual replayed old sample, set $a=y_i$. The profile index is
then the true old label, so
Proposition~\ref{prop:contribution_margin_loss} applies to the same sample and
candidate set.

\begin{corollary}[CMCDR retention certificate]
\label{cor:cmcdr_retention_certificate}
Adopt the retained-margin construction of
Proposition~\ref{prop:contribution_margin_loss} under the same fixed candidate
set, and suppose $m^{\mathrm{keep}}_{i,r\to t}>0$. If
\begin{equation}
d^{\mathrm{MCD}}_{i,r\to t}<m^{\mathrm{keep}}_{i,r\to t},
\label{eq:mcd_retention_certificate}
\end{equation}
then class $a$ remains the strict prediction over $\mathcal A$ at stage $t$.
A sufficient loss-level condition is
\begin{equation}
D\left[
\Psi_\delta(\ell_{i,r\to t})
+R_{i,r\to t}
\Psi_\delta\!\left(\frac{\ell_{i,r\to t}}{\beta}\right)
\right]
<m^{\mathrm{keep}}_{i,r\to t}.
\label{eq:cmcdr_loss_retention_certificate}
\end{equation}
\end{corollary}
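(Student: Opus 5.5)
The plan is to chain three earlier results: Lemma~\ref{lem:mcd_controls_contribution_shift}, the first (strict-retention) clause of Proposition~\ref{prop:contribution_margin_loss}, and Lemma~\ref{lem:cmcdr_controls_mcd}. All three are stated for the same fixed sample $i$, old-class index $a$, stages $r<t$, and candidate set $\mathcal A\ni a$. So I first check that the corollary's setting is exactly this one. The replayed sample has $a=y_i$ and the same $\mathcal A$ at both stages, so the retained margin $m^{\mathrm{keep}}_{i,r\to t}$, the shift $\Delta^{\mathrm{contrib}}_{i,r\to t}$, and the profiles $\mathbf c_r,\mathbf c_t$ all refer to the same objects.

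For the first claim, assume $d^{\mathrm{MCD}}_{i,r\to t}<m^{\mathrm{keep}}_{i,r\to t}$. Equation~\ref{eq:aggregate_shift_mcd_bound} gives $|\Delta^{\mathrm{contrib}}_{i,r\to t}|\leq d^{\mathrm{MCD}}_{i,r\to t}$. Hence
\begin{equation*}
\Delta^{\mathrm{contrib}}_{i,r\to t}\geq -d^{\mathrm{MCD}}_{i,r\to t}>-m^{\mathrm{keep}}_{i,r\to t}.
\end{equation*}
This is the hypothesis of the first implication in Proposition~\ref{prop:contribution_margin_loss}. Concretely, $B_{i,t}(\mathcal M)=m^{\mathrm{keep}}_{i,r\to t}+\Delta^{\mathrm{contrib}}_{i,r\to t}>0$, and the lower bound in Eq.~\ref{eq:contribution_margin_bound} then gives $\gamma_{i,t}>0$. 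So $a$ is the strict prediction over $\mathcal A$ at stage $t$. The assumption $m^{\mathrm{keep}}_{i,r\to t}>0$ is not used in this chain. It only ensures the certificate can be satisfied at all, since $d^{\mathrm{MCD}}\geq0$.

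For the loss-level condition, Eq.~\ref{eq:cmcdr_loss_to_mcd_bound} in Lemma~\ref{lem:cmcdr_controls_mcd} gives
\begin{equation*}
d^{\mathrm{MCD}}_{i,r\to t}\leq D\Bigl[\Psi_\delta(\ell_{i,r\to t})+R_{i,r\to t}\Psi_\delta(\ell_{i,r\to t}/\beta)\Bigr].
\end{equation*}
If the right-hand side is strictly below $m^{\mathrm{keep}}_{i,r\to t}$, then Eq.~\ref{eq:mcd_retention_certificate} holds, and the first part applies.

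There is no real obstacle here; the argument is a composition of inequalities. The only step needing care is confirming that Lemma~\ref{lem:cmcdr_controls_mcd} and Lemma~\ref{lem:mcd_controls_contribution_shift} use the same $\epsilon$-normalized profiles as the definition of $d^{\mathrm{MCD}}$. In particular, the $(\min(s_t,s_r)+\epsilon)$ scaling in Eq.~\ref{eq:cmcdr_theory_drift_terms} must be the one used in the decomposition proof of Lemma~\ref{lem:mcd_controls_contribution_shift}. It also has to be the one entering $R_{i,r\to t}$, so the bounds compose without extra constants. I will also note that strictness is preserved throughout, because the only strict inequality is the assumed one and every other step is non-strict.
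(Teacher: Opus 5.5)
Your proposal is correct and matches the paper's proof: Lemma~\ref{lem:mcd_controls_contribution_shift} gives $\Delta^{\mathrm{contrib}}_{i,r\to t}\geq -d^{\mathrm{MCD}}_{i,r\to t}>-m^{\mathrm{keep}}_{i,r\to t}$, so the strict-retention clause of Proposition~\ref{prop:contribution_margin_loss} applies, and the loss-level condition follows by composing this with Lemma~\ref{lem:cmcdr_controls_mcd}. Your side remark is also right: the hypothesis $m^{\mathrm{keep}}_{i,r\to t}>0$ is redundant because $d^{\mathrm{MCD}}_{i,r\to t}\geq 0$.
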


\begin{proof}
Lemma~\ref{lem:mcd_controls_contribution_shift} implies
$\Delta^{\mathrm{contrib}}_{i,r\to t}
\geq-d^{\mathrm{MCD}}_{i,r\to t}
>-m^{\mathrm{keep}}_{i,r\to t}$.
The retention condition in Proposition~\ref{prop:contribution_margin_loss}
therefore applies. Equation~\ref{eq:cmcdr_loss_retention_certificate} follows
from Lemma~\ref{lem:cmcdr_controls_mcd}.
\end{proof}

The corollary is a one-sided certificate: small per-profile MCD prevents
the contribution-induced response shift from exhausting a positive retained
margin, whereas large drift does not by itself imply forgetting. It also does
not attribute forgetting caused by changes outside the centered contribution
response to CMCDR. The replay weighting $\pi$ in
Eq.~\ref{eq:cmcdr_average_loss_to_mcd_bound} is the empirical distribution of
the sampled old profiles, not the complete old-data distribution.

\paragraph{Replay-free transfer condition.}
The replay-free loss controls contribution drift on current-task probes rather
than old samples. To state the required transfer explicitly, define the
probability weighting over current-probe and old-output pairs by
\begin{equation}
\nu_t(\mathbf x_i,k)
:=
\frac{q_{i,k}}{|\mathcal X_{\mathrm{new}}|},
\qquad
\mathbf x_i\in\mathcal X_{\mathrm{new}},\quad k\in\mathcal Y_{<t}.
\label{eq:replay_free_probe_distribution}
\end{equation}
Let $\mu_t$ be the target weighting over old sample-label pairs. This weighting
may be the task-balanced distribution of a held-out old probe bank. For a fixed
frozen model and an admissible current model, let
\begin{equation}
g(\mathbf x,k)
:=
d^{\mathrm{MCD}}_{t-1\to t}(\mathbf x,k)
\label{eq:replay_free_drift_function}
\end{equation}
denote the corresponding nonnegative profile-drift function. Let $\mathcal G_t$
collect the functions induced by the model updates under consideration and
define
\begin{equation}
\operatorname{disc}_{\mathcal G_t}(\mu_t,\nu_t)
:=
\sup_{g\in\mathcal G_t}
\left|
\mathbb E_{\mu_t}[g]-\mathbb E_{\nu_t}[g]
\right|.
\label{eq:replay_free_coverage_discrepancy}
\end{equation}
We assume
$\operatorname{disc}_{\mathcal G_t}(\mu_t,\nu_t)\leq\eta_t$.
This condition requires the weighted current probes to cover the old-model
response regions relevant to contribution drift. It also requires the teacher
weights to assign sufficient mass to the relevant old outputs.

\begin{proposition}[Replay-free old-distribution drift bound]
\label{prop:replay_free_drift_bound}
Assume the transfer condition above and
$R_{i,t-1\to t}\leq R_{\max}$ on the current-probe profiles. Then
\begin{equation}
\begin{aligned}
\mathbb E_{\mu_t}
\left[d^{\mathrm{MCD}}_{t-1\to t}\right]
\leq
D\left[
\Psi_\delta\!\left(\mathcal L^{\mathrm{free}}_{\mathrm{cmcdr}}\right)
{}+R_{\max}\Psi_\delta\!\left(
\frac{\mathcal L^{\mathrm{free}}_{\mathrm{cmcdr}}}{\beta}
\right)
\right]
+\eta_t.
\end{aligned}
\label{eq:replay_free_old_distribution_bound}
\end{equation}
\end{proposition}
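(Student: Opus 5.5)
The argument has two steps: a change of measure from the old-data weighting $\mu_t$ to the current-probe weighting $\nu_t$, paid for by the coverage discrepancy, followed by the averaged form of Lemma~\ref{lem:cmcdr_controls_mcd} applied under $\nu_t$.

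\textbf{Step 1: transfer to the probe weighting.} Let $g=d^{\mathrm{MCD}}_{t-1\to t}$ be the drift function induced by the current model update. By assumption this function lies in $\mathcal G_t$. The definition of $\operatorname{disc}_{\mathcal G_t}$ together with the assumption $\operatorname{disc}_{\mathcal G_t}(\mu_t,\nu_t)\le\eta_t$ gives
\[
\mathbb E_{\mu_t}[g]\le \mathbb E_{\nu_t}[g]+\eta_t .
\]
It remains to bound $\mathbb E_{\nu_t}[g]$.

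\textbf{Step 2: check that $\nu_t$ is admissible.} First, $\nu_t$ is a probability weighting over the finite collection of current-probe and old-output profile comparisons. For each probe, the teacher weights $q_{i,k}$ are a softmax over $\mathcal Y_{<t}$, so they sum to one over $k$. Averaging over $\mathcal X_{\mathrm{new}}$ then gives total mass one.

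Second, each comparison uses the fixed candidate set $\mathcal A=\mathcal Y_{<t}$ and the stage pair $r=t-1$. This is exactly the setting covered by Lemma~\ref{lem:cmcdr_controls_mcd}.

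Third, writing $\ell_{i,k}$ for the per-profile discrepancy of Eq.~\ref{eq:cmcdr_per_profile_loss} at output index $k$, the $\nu_t$-weighted mean equals the replay-free loss:
\[
\bar\ell=\mathbb E_{\nu_t}[\ell_{i,k}]=\frac{1}{|\mathcal X_{\mathrm{new}}|}\sum_i\sum_k q_{i,k}\,\ell_{i,k}=\mathcal L^{\mathrm{free}}_{\mathrm{cmcdr}}.
\]
This uses the fact that $d_{\mathrm{cp}}$ coincides numerically with $\ell$, since the stop-gradient does not change values.

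\textbf{Step 3: apply the averaged lemma.} We have $R_{i,t-1\to t}\le R_{\max}$ on the probe profiles by hypothesis. Eq.~\ref{eq:cmcdr_average_loss_to_mcd_bound} with $\pi=\nu_t$ therefore gives
\[
\mathbb E_{\nu_t}[g]\le D\left[\Psi_\delta\!\left(\mathcal L^{\mathrm{free}}_{\mathrm{cmcdr}}\right)+R_{\max}\Psi_\delta\!\left(\frac{\mathcal L^{\mathrm{free}}_{\mathrm{cmcdr}}}{\beta}\right)\right].
\]
Combining this with Step 1 yields Eq.~\ref{eq:replay_free_old_distribution_bound}.

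\textbf{Main obstacle.} The only delicate point is bookkeeping: the identification $\bar\ell=\mathcal L^{\mathrm{free}}_{\mathrm{cmcdr}}$ must hold exactly, with the $q$-weights fixed and normalized. Concavity and monotonicity of $\Psi_\delta$ are already handled inside Lemma~\ref{lem:cmcdr_controls_mcd}.
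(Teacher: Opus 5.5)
Your proposal is correct and matches the paper's proof. The paper likewise uses $\sum_{k}q_{i,k}=1$ to identify $\mathcal L^{\mathrm{free}}_{\mathrm{cmcdr}}=\mathbb E_{\nu_t}[\ell_{i,t-1\to t}]$, applies the averaged bound of Lemma~\ref{lem:cmcdr_controls_mcd} with $\pi=\nu_t$, and then adds $\eta_t$ through the discrepancy definition for the realized $g\in\mathcal G_t$; you only perform the change of measure first rather than last, which makes no difference.
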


\begin{proof}
The weights in Eq.~\ref{eq:replay_free_probe_distribution} sum to one because
$\sum_{k\in\mathcal Y_{<t}}q_{i,k}=1$. Hence
$\mathcal L^{\mathrm{free}}_{\mathrm{cmcdr}}
=\mathbb E_{\nu_t}[\ell_{i,t-1\to t}]$.
Applying Lemma~\ref{lem:cmcdr_controls_mcd} with $\pi=\nu_t$ bounds
$\mathbb E_{\nu_t}[d^{\mathrm{MCD}}_{t-1\to t}]$ by the bracketed term in
Eq.~\ref{eq:replay_free_old_distribution_bound}. The definition of
$\operatorname{disc}_{\mathcal G_t}$ then gives
$\mathbb E_{\mu_t}[g]\leq\mathbb E_{\nu_t}[g]+\eta_t$ for the realized drift
function $g\in\mathcal G_t$.
\end{proof}

\begin{corollary}[Replay-free contribution-forgetting bound]
\label{cor:replay_free_forgetting_bound}
For $(\mathbf x_i,y_i)\sim\mu_t$, define the retained margin and contribution
shift using $r=t-1$, $a=y_i$, and $\mathcal A=\mathcal Y_{<t}$. Let
$\mathcal F_t^{\mathrm{contrib}}$ denote the event that a counterfactually
retained old prediction becomes forgotten through the contribution shift in
Proposition~\ref{prop:contribution_margin_loss}. For any $m_0>0$,
\begin{equation}
\begin{aligned}
\Pr_{\mu_t}(\mathcal F_t^{\mathrm{contrib}})
\leq
&\Pr_{\mu_t}\!\left(0<m^{\mathrm{keep}}_{i,t-1\to t}\leq m_0\right)\\
&+\frac{D}{m_0}\left[
\Psi_\delta\!\left(\mathcal L^{\mathrm{free}}_{\mathrm{cmcdr}}\right)
{}+R_{\max}\Psi_\delta\!\left(
\frac{\mathcal L^{\mathrm{free}}_{\mathrm{cmcdr}}}{\beta}
\right)
\right]
+\frac{\eta_t}{m_0}.
\end{aligned}
\label{eq:replay_free_forgetting_bound}
\end{equation}
\end{corollary}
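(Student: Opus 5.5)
The plan is to decompose the event $\mathcal F_t^{\mathrm{contrib}}$ according to the size of the retained margin, then bound each piece with results already established. By Proposition~\ref{prop:contribution_margin_loss}, on $\mathcal F_t^{\mathrm{contrib}}$ we have $m^{\mathrm{keep}}_{i,t-1\to t}>0$ and $\Delta^{\mathrm{contrib}}_{i,t-1\to t}\leq -m^{\mathrm{keep}}_{i,t-1\to t}$. Hence $|\Delta^{\mathrm{contrib}}_{i,t-1\to t}|\geq m^{\mathrm{keep}}_{i,t-1\to t}$. Lemma~\ref{lem:mcd_controls_contribution_shift}, applied with $r=t-1$, $a=y_i$ and the fixed set $\mathcal A=\mathcal Y_{<t}$, then gives
\begin{equation*}
d^{\mathrm{MCD}}_{i,t-1\to t}\geq |\Delta^{\mathrm{contrib}}_{i,t-1\to t}|\geq m^{\mathrm{keep}}_{i,t-1\to t}.
\end{equation*}

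Next, fix $m_0>0$ and split $\mathcal F_t^{\mathrm{contrib}}$ into two pieces:
\begin{equation*}
\mathcal F_t^{\mathrm{contrib}}\cap\{0<m^{\mathrm{keep}}_{i,t-1\to t}\leq m_0\}
\quad\text{and}\quad
\mathcal F_t^{\mathrm{contrib}}\cap\{m^{\mathrm{keep}}_{i,t-1\to t}>m_0\}.
\end{equation*}
The first piece is contained in $\{0<m^{\mathrm{keep}}_{i,t-1\to t}\leq m_0\}$, which produces the first term of Eq.~\ref{eq:replay_free_forgetting_bound}. On the second piece the chain above gives $d^{\mathrm{MCD}}_{i,t-1\to t}>m_0$. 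So this piece lies in $\{d^{\mathrm{MCD}}_{t-1\to t}\geq m_0\}$. Markov's inequality, valid because $d^{\mathrm{MCD}}\geq0$, then bounds its $\mu_t$-probability by $\mathbb E_{\mu_t}[d^{\mathrm{MCD}}_{t-1\to t}]/m_0$.

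Finally, substitute Proposition~\ref{prop:replay_free_drift_bound} for $\mathbb E_{\mu_t}[d^{\mathrm{MCD}}_{t-1\to t}]$. This yields the bracketed $D[\cdots]$ term plus $\eta_t$, both divided by $m_0$, which is exactly the remaining part of Eq.~\ref{eq:replay_free_forgetting_bound}. A union bound over the two pieces completes the argument.

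The main point requiring care is consistency between the quantities used in Proposition~\ref{prop:replay_free_drift_bound} and those used here. The drift function $g(\mathbf x,k)=d^{\mathrm{MCD}}_{t-1\to t}(\mathbf x,k)$ under $\mu_t$ must be evaluated at $k=y_i$ with candidate set $\mathcal Y_{<t}$. This is the same profile used to define $m^{\mathrm{keep}}$ and $\Delta^{\mathrm{contrib}}$ in the corollary, so Lemma~\ref{lem:mcd_controls_contribution_shift} applies pointwise. Measurability of the events is taken for granted since $\mu_t$ is a weighting over finitely many pairs.
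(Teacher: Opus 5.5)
Your proposal is correct and follows the paper's proof essentially step for step. Both split $\mathcal F_t^{\mathrm{contrib}}$ at $m_0$, use Proposition~\ref{prop:contribution_margin_loss} and Lemma~\ref{lem:mcd_controls_contribution_shift} to force $d^{\mathrm{MCD}}_{t-1\to t}>m_0$ on the large-margin piece, then finish with Markov's inequality and Proposition~\ref{prop:replay_free_drift_bound}; your explicit check that the drift function under $\mu_t$ is evaluated at $k=y_i$ with $\mathcal A=\mathcal Y_{<t}$ makes precise a point the paper leaves implicit.
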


\begin{proof}
On the event $\mathcal F_t^{\mathrm{contrib}}$, suppose
$m^{\mathrm{keep}}_{i,t-1\to t}>m_0$.
Proposition~\ref{prop:contribution_margin_loss} requires
\begin{equation}
\Delta^{\mathrm{contrib}}_{i,t-1\to t}
\leq-m^{\mathrm{keep}}_{i,t-1\to t}<-m_0.
\label{eq:replay_free_forgetting_implies_drift}
\end{equation}
Lemma~\ref{lem:mcd_controls_contribution_shift} therefore implies
$d^{\mathrm{MCD}}_{t-1\to t}>m_0$. Splitting the event at $m_0$, applying
Markov's inequality to the second part, and using
Proposition~\ref{prop:replay_free_drift_bound} proves the result.
\end{proof}

The transfer condition is essential. Without coverage of the relevant old
response regions, $\eta_t$ can be large and the replay-free bounds become
vacuous. Neither variant reconstructs the full task-balanced MCD because
the training profiles, reference stages, and aggregation weights differ from
the diagnostic protocol. Reduction of the held-out MCD therefore remains
an empirical generalization result.

\section{Experimental Settings and Implementation Details}
\label{app:experimental_details}

\subsection{Experimental Settings for Exploratory Analyses}
\label{app:experimental_settings}

This subsection collects the settings for the exploratory analyses in
Sec.~\ref{sec:contribution_drift}, including the contribution-drift diagnostic
and the cross-modal representation-stability analysis.

\phantomsection
\label{app:exploratory_setup}

\noindent\textbf{Modality-contribution drift analysis.} The exploratory modality-contribution-drift analysis in
Sec.~\ref{sec:contribution_drift} is designed to diagnose whether a standard
continual-learning update preserves old-task modality contributions. To keep
the analysis controlled, we use iCaRL as the baseline and evaluate it on two
representative MMCL benchmarks: AVE, an audio--visual class-incremental
benchmark, and UESTC-MMEA-CL, a video-sensor benchmark with synchronized video,
acceleration, and gyroscope streams.
The stage-wise trajectories in this exploratory diagnostic run are kept
separate from the unified main-evaluation aggregates in
Tables~\ref{tab:class_incremental_results} and
\ref{tab:rep_preservation_complementarity}; values should therefore only be
compared within the same reported protocol.

For AVE, visual and audio features are extracted by pretrained VideoMAE and
AudioMAE models. For UESTC-MMEA-CL, the video stream is processed in the same
way, while acceleration and gyroscope signals are normalized and encoded by a
CNN-LSTM sensor encoder. For both datasets, modality-specific features are
projected into a shared space, fused, and passed to the classifier.

Classes are arranged into a fixed incremental order. The model is trained on
the initial task and then updated one task at a time from the previous
checkpoint. The same optimization schedule, exemplar policy, and evaluation
protocol are used at every incremental stage. After each update, we report the
mean task accuracy over observed tasks and compute MCD over old classes using
the contribution estimator in Sec.~\ref{sec:causal_contribution}.

\label{app:cross_modal_stability_setup}

\noindent\textbf{Cross-modal stability analysis.} For the cross-modal stability analysis, we compare iCaRL with
D-AVSC (RepAlign)~\citep{pian2023avcil} and
CrossSDC~\citep{pian2024contavsep} using the same task order, memory,
optimizer, and training schedule. For a batch $\mathcal B$, let
$\mathbf z_{i,s}^{m}$ denote the $\ell_2$-normalized projected representation
of modality $m$ at stage $s$. For a modality pair $(m,n)$, define
\begin{equation}
\ell_{i,j}^{m,n}(s_1,s_2)
=-\log\frac{\exp(\langle\mathbf z_{i,s_1}^{m},
\mathbf z_{j,s_2}^{n}\rangle/\tau)}
{\sum_{k\in\mathcal B}\exp(\langle\mathbf z_{i,s_1}^{m},
\mathbf z_{k,s_2}^{n}\rangle/\tau)}.
\label{eq:cross_modal_contrastive_term}
\end{equation}
D-AVSC combines instance-aware and class-aware terms. With
$\mathcal P(i)=\{j:y_j=y_i\}$, its objective is
\begin{equation}
\mathcal L_I=\frac{1}{|\mathcal B|}\sum_i\ell_{i,i}^{a,v}(t,t),\qquad
\mathcal L_C=\frac{1}{|\mathcal B|}\sum_i\frac{1}{|\mathcal P(i)|}
\sum_{j\in\mathcal P(i)}\ell_{i,j}^{a,v}(t,t),
\quad
\mathcal L_{\mathrm{D\text{-}AVSC}}=\lambda_I\mathcal L_I+\lambda_C\mathcal L_C.
\label{eq:davsc_objective}
\end{equation}
CrossSDC evaluates these terms for each cross-modal feature pair. New-task
samples use $(s_1,s_2)=(t,t)$. Memory samples use
$s_1,s_2\in\{t,t-1\}$ to preserve previous cross-modal similarities:
\begin{equation}
\mathcal L_{\mathrm{CrossSDC}}
=\lambda_{\mathrm{ins}}\mathcal L_{\mathrm{ins}}
+\lambda_{\mathrm{cls}}\mathcal L_{\mathrm{cls}}.
\label{eq:crosssdc_objective}
\end{equation}
The selected constraint is added to the iCaRL objective.

We measure representation drift with linear CKA~\citep{kornblith2019similarity}
on the same old-class probes used for MCD. For modality $m$, let
$\mathbf H_{s}^{m}\in\mathbb R^{N\times d}$ collect the projected pre-fusion
representations at stage $s$, centered across the $N$ probes. We compute
\begin{equation}
\operatorname{CKA}(\mathbf H_{t-1}^{m},\mathbf H_t^{m})
=\frac{\|{\mathbf H_{t-1}^{m}}^{\!\top}\mathbf H_t^{m}\|_F^2}
{\|{\mathbf H_{t-1}^{m}}^{\!\top}\mathbf H_{t-1}^{m}\|_F
 \|{\mathbf H_t^{m}}^{\!\top}\mathbf H_t^{m}\|_F},
\qquad
D_t^{\mathrm{rep}}
=1-\frac{1}{M}\sum_{m=1}^{M}\operatorname{CKA}
(\mathbf H_{t-1}^{m},\mathbf H_t^{m}).
\label{eq:cross_modal_cka_drift}
\end{equation}
Lower $D_t^{\mathrm{rep}}$ indicates greater stability. MCD uses the same
probes and fixed candidate sets from Sec.~\ref{sec:contribution_drift}. Both
metrics are computed per transition and then averaged within each dataset.

For Fig.~\ref{stability_panels}(d), each class's residual score is the mean of
its transition-averaged MCD under RepAlign and CrossSDC. A within-dataset
median split defines fixed low- and high-residual groups for all three methods.
Bars show equal-class mean forgetting and points show class-level values.

\subsection{Modality-Removal Realizations}
\label{app:modality_removal}

Equation~\ref{eq:modality_intervention} jointly defines modality removal through
an interface-independent absence symbol and the resulting subset response. In a
fixed-interface architecture, each
absence symbol must be realized by a deterministic value or control signal at
the corresponding fusion input. Let $E_{m,s}$ be the modality-$m$ encoder at stage $s$,
$\mathbf h_{i,s}^{m}=E_{m,s}(x_i^m)$, and $G_{\theta_s}$ the remaining
fusion-and-classification map. For a replacement collection
$\mathbf q_s=\{\mathbf q_s^m\}_{m=1}^{M}$, we compute
\begin{equation}
\widetilde{\mathbf h}_{i,s}^{m,\mathbf q}(\mathcal S)
=
\begin{cases}
\mathbf h_{i,s}^{m}, & m\in\mathcal S,\\
\mathbf q_s^m, & m\notin\mathcal S,
\end{cases}
\qquad
\mathbf z_{i,s}^{\mathbf q}(\mathcal S)
=
G_{\theta_s}\!\left(
\{\widetilde{\mathbf h}_{i,s}^{m,\mathbf q}(\mathcal S)\}_{m=1}^{M}
\right).
\label{eq:removal_realization}
\end{equation}
For a chosen realization $\mathbf q$, Eq.~\ref{eq:modality_intervention} is
instantiated by $\mathbf z_{i,s}(\mathcal S):=
\mathbf z_{i,s}^{\mathbf q}(\mathcal S)$.
The same realization rule is used for the current and frozen models whenever
their contribution profiles are compared. Exact profile construction evaluates
all $2^M$ modality subsets; the benchmarks in this study contain at most three
modalities, while larger modality sets require sampled or approximate coalition
evaluation.

\paragraph{Zero-representation removal.}
Zero-representation removal sets $\mathbf q_s^m=\mathbf 0$ for every excluded
modality. It removes all sample-dependent information from that fusion input,
requires neither reference data nor additional parameters, and requires no
additional encoder evaluation to construct the replacement. The rule is
available for every fixed-shape backbone in our evaluation and remains
compatible with replay-free learning because it does not require stored
examples or stage-dependent population statistics. As with other
activation-level interventions, the resulting counterfactual response is
defined relative to the chosen fusion interface; all cross-stage comparisons
therefore use the same interface and removal rule. This operational control is
not invariant to arbitrary reparameterizations of the latent space: two models
that agree on observed full-modality inputs can still respond differently at
the zero anchor. Accordingly, MCD is interpreted relative to the stated
fusion interface and removal rule rather than as an interface-free measure of
semantic absence.

\paragraph{Reference-mean removal.}
A reference-mean realization replaces an excluded modality by
\begin{equation}
\boldsymbol\mu_s^m
=
\frac{1}{|\mathcal R^m|}
\sum_{x_j^m\in\mathcal R^m}E_{m,s}(x_j^m),
\qquad
\mathbf q_s^m=\boldsymbol\mu_s^m,
\label{eq:reference_mean_removal}
\end{equation}
where $\mathcal R^m$ is a fixed unlabeled reference set. This construction
avoids evaluating the fusion module at the numerical origin, but it represents
typical modality evidence rather than absence. It also requires access to the
same reference population at every stage. Storing old reference inputs conflicts
with a strict replay-free protocol, whereas recomputing the mean from current
tasks changes the control distribution across stages and can mix task shift
with contribution drift. Even with a fixed reference set, $\boldsymbol\mu_s^m$
changes when $E_{m,s}$ changes, so cross-stage differences can also include
drift of the reference representation itself.

\paragraph{Masked structural removal.}
Architectures with modality masks or variable-length token sets can remove an
excluded modality by setting an availability indicator
$a_m(\mathcal S)=\mathbf{1}[m\in\mathcal S]$ and preventing the corresponding
branch or tokens from participating in fusion. This realizes structural
absence without choosing a replacement representation. Its use is nevertheless
architecture dependent: many fixed-concatenation baselines do not expose a
compatible mask, and adding one changes the fusion interface and potentially
its normalization or attention pattern.

Table~\ref{tab:removal_operator_comparison} summarizes the properties that
follow directly from these constructions. The table reports implementation
requirements rather than benchmark measurements.

\begin{table}[H]
\centering
\small
\setlength{\tabcolsep}{4.5pt}
\caption{Design comparison of modality-removal realizations. ``Conditional''
means that replay-free compatibility depends on an admissible reference set
that contains no stored old-task examples.}
\label{tab:removal_operator_comparison}
\resizebox{\textwidth}{!}{%
\begin{tabular}{lccccc}
\toprule
Realization & Reference data & Architecture change & Replay-free & Extra state & Removed-input interpretation \\
\midrule
Zero representation & None & None & Yes & None & No sample-specific modality signal \\
\rowcolor{cmcdrgreen}
Reference mean & Required & None & Conditional & Reference statistics & Typical modality evidence \\
Masked removal & None & Mask interface & Yes & Availability mask & Structural absence \\
\bottomrule
\end{tabular}}
\end{table}

\paragraph{Controlled sensitivity protocol.}
Because the removal operator enters both the contribution estimator and the
CMCDR objective, changing it only at evaluation time would not test the method
used during learning. A controlled comparison must therefore start each variant
from the same checkpoint and retrain CMCDR with one removal rule used
consistently for profile construction, regularization, and evaluation. The
task sequence, optimizer, memory budget, hyperparameters, and random seeds are
held fixed. For each variant, the comparison should report final average accuracy,
average forgetting, MCD, and wall-clock overhead. Each configuration should be
evaluated over three seeds and summarized as mean $\pm$ standard deviation. The protocol in
Table~\ref{tab:removal_sensitivity_protocol} isolates the removal rule as the
only experimental factor for zero and reference-mean removal. Masked removal is
evaluated only on backbones that already expose a native modality-mask interface;
adding a new mask to other backbones would confound the removal rule with an
architecture change.

\begin{table}[H]
\centering
\small
\setlength{\tabcolsep}{5pt}
\caption{Controlled protocol for evaluating sensitivity to the
modality-removal realization. Each row denotes an independently trained CMCDR
variant rather than a post-hoc change to the evaluator.}
\label{tab:removal_sensitivity_protocol}
\resizebox{\textwidth}{!}{%
\begin{tabular}{llll}
\toprule
Variant & Removal used in training and evaluation & Eligible backbones & Reported quantities \\
\midrule
Zero representation & $\mathbf q_s^m=\mathbf 0$ & All evaluated backbones & Accuracy, forgetting, MCD, runtime \\
\rowcolor{cmcdrgreen}
Reference mean & $\mathbf q_s^m=\boldsymbol\mu_s^m$ from Eq.~\ref{eq:reference_mean_removal} & Fixed-interface backbones & Accuracy, forgetting, MCD, runtime \\
Masked removal & Branch/token availability mask & Native-mask backbones only & Accuracy, forgetting, MCD, runtime \\
\bottomrule
\end{tabular}}
\end{table}

\paragraph{Selected realization.}
We use zero-representation removal in all reported experiments. It is the only
realization above that simultaneously preserves the unmodified interfaces of
all evaluated backbones, requires no auxiliary reference population, applies
unchanged in replay-based and replay-free settings, and introduces no
additional parameters, stored statistics, or architecture-specific masking
logic. It also provides a deterministic, sample-independent control for every
modality subset. This selection is based on a common implementation contract,
not on a claim that zero representation is empirically optimal.
Reference-mean removal changes the operational contrast from a fixed null
anchor to typical evidence and introduces a reference-distribution choice,
whereas masked removal is not uniformly available across the heterogeneous
fusion architectures considered here. A consistent interpretation of
cross-stage MCD requires the fusion interface and removal rule to remain fixed
within each comparison; zero representation supplies this common operational
definition across all of our experimental settings.

\subsection{Main Training and Evaluation Setup}
\label{app:main_training_setup}

\paragraph{Task construction.}
For multimodal class-incremental learning, each benchmark is divided into a
fixed sequence of disjoint class groups. The model first learns the initial
class group and then receives one new class group at each incremental stage.
Task identity is not used at inference. Evaluation is performed over all
classes observed so far. We use the released train/test partitions of each
benchmark when available. When a benchmark does not provide an incremental
split, we construct sample-level train/test partitions before applying the
class sequence, and keep the resulting split and task order fixed for all
methods. AVE uses four tasks of seven classes, Kinetics-Sounds uses five tasks
of six selected classes, and UESTC-MMEA-CL uses eight tasks of four classes,
matching the corresponding benchmark protocols.

For continual multimodal question answering, VQAv2 is evaluated under the VQACL protocol,
and Split-AVQA and Split-MUSIC-AVQA are evaluated under the AVQACL protocol.

\paragraph{Model and baseline implementation.}
For audio--visual experiments, we use pretrained VideoMAE features for the
visual stream and pretrained AudioMAE features for the audio stream. For
UESTC-MMEA-CL, the video stream follows the same visual processing pipeline,
while acceleration and gyroscope signals are normalized and encoded by a
CNN-LSTM sensor encoder before fusion. For VQAv2, Split-AVQA, and Split-MUSIC-AVQA,
we keep the original visual, audio, language, and answer prediction modules used by
each baseline.

CMCDR is added as an auxiliary contribution-preservation objective and does not
replace the backbone model. Paired comparisons between a baseline and its CMCDR
variant therefore use the same architecture, classifier, optimizer, memory
budget, task order, and inference protocol.
Whenever a main-evaluation method--dataset pair appears in more than one table,
the same stored aggregate is reused rather than recomputed or transcribed
independently.
All contribution profiles use the zero-representation removal defined in
Appendix~\ref{app:modality_removal}.

\paragraph{Training and evaluation.}
All methods are trained sequentially. At stage \(t=1\), the model is trained on
the initial task. At each later stage, the model is initialized from the
previous checkpoint and updated with the data available under the corresponding
continual-learning setting. Replay-based methods use their exemplar buffer
together with current-task data. Replay-free methods use only current-task
samples and the frozen previous model.

For every baseline, we keep the training loss, optimizer, learning-rate
schedule, batch size, number of epochs, exemplar-selection rule, and memory
budget specified by the original method or its released implementation. These
settings are held fixed across incremental stages. For CMCDR variants, the
base training configuration is unchanged. The only additional term is the
contribution-preservation regularizer described in Sec.~\ref{sec:method}. After
each stage, we evaluate all tasks or classes observed so far and compute
average accuracy and average forgetting from the resulting accuracy matrix.

\subsection{CMCDR for Continual Multimodal Question Answering}
\label{app:vqa_cmcdr}

Let $\mathbf x_i=(x_i^m)_{m\in\mathcal M}$ be a continual-QA input and
$a_i=(a_{i,1},\ldots,a_{i,L_i})$ an answer represented by its valid output
units; a single-step answer is the special case $L_i=1$. At stage
$s\in\{t-1,t\}$, let $u_{i,s,v}^{(j)}(\mathcal S;a_{i,<j})$ be the logit of
output unit $v$ when only modalities $\mathcal S\subseteq\mathcal M$ are
retained. We define the length-normalized answer response
\begin{equation}
R_{i,s}^{a_i}(\mathcal S)
=
\frac{1}{L_i}\sum_{j=1}^{L_i}
\left[
u_{i,s,a_{i,j}}^{(j)}(\mathcal S;a_{i,<j})
-\log\!\sum_{v\in\mathcal V_{i,j}^{<t}\setminus\{a_{i,j}\}}
\exp\!\left(u_{i,s,v}^{(j)}(\mathcal S;a_{i,<j})\right)
\right],
\label{eq:vqa_answer_score}
\end{equation}
where $\mathcal V_{i,j}^{<t}$ is the previous-stage output set at position $j$,
with $|\mathcal V_{i,j}^{<t}|\geq2$. This set, the answer, and its valid
positions are fixed across the current and frozen models and all subset
interventions; outputs introduced at stage $t$ are excluded from the comparison.
Equation~\ref{eq:vqa_answer_score} reduces to
Eq.~\ref{eq:benefit_function} when $L_i=1$. Substituting $R$ for $B$ in
Eq.~\ref{eq:general_contribution} yields the unified answer-conditioned profile
$\mathbf C_{i,s}^{a_i}$.

For replay-based training, CMCDR directly matches the contribution profile of
the ground-truth answer on each replayed question--answer pair:
\begin{equation}
\mathcal L_{\mathrm{cmcdr}}^{\mathrm{CQA,rep}}
=
\frac{1}{|\mathcal X_{\mathrm{old}}|}
\sum_{(\mathbf x_i,y_i)\in\mathcal X_{\mathrm{old}}}
d_{\mathrm{cp}}\!\left(
\mathbf C_{i,t}^{y_i},\mathbf C_{i,t-1}^{y_i}
\right).
\label{eq:cqa_replay_loss}
\end{equation}

For replay-free training, current-task samples
$\mathbf x_i\in\mathcal X_{\mathrm{new}}$ serve as intervention probes. Let
$\mathcal H_i=\operatorname{Ref}_{\theta_{t-1}}(\mathbf x_i;\mathcal M)$ be
the fixed reference-answer set returned by the previous-stage finite output
head or the frozen model's native decoder. The set is computed once from the
full input and held fixed within the update. Let
$\phi_{i,t-1}^{a}(\mathcal M)$ denote the frozen model's native full-input score
for answer $a$: the output logit for a single-step answer or the
length-normalized teacher-forced log-likelihood for a multi-step answer. The
frozen-model weight is
\begin{equation}
q_{i,a}
=
\frac{\exp\!\left(\phi_{i,t-1}^{a}(\mathcal M)/T\right)}
{\sum_{b\in\mathcal H_i}
\exp\!\left(\phi_{i,t-1}^{b}(\mathcal M)/T\right)},
\qquad a\in\mathcal H_i.
\label{eq:cqa_teacher_weight}
\end{equation}
The replay-free objective is
\begin{equation}
\mathcal L_{\mathrm{cmcdr}}^{\mathrm{CQA,free}}
=
\frac{1}{|\mathcal X_{\mathrm{new}}|}
\sum_{\mathbf x_i\in\mathcal X_{\mathrm{new}}}
\sum_{a\in\mathcal H_i}
q_{i,a}\,
d_{\mathrm{cp}}\!\left(
\mathbf C_{i,t}^{a},\mathbf C_{i,t-1}^{a}
\right).
\label{eq:cqa_replay_free_loss}
\end{equation}
The frozen model receives no gradient, and both models use the same modality
removal and $2^{|\mathcal M|}$ subset evaluations. The original QA loss and
decoding protocol remain unchanged. VQACL and AVQACL instantiate this same
objective with their respective modality sets~\citep{zhang2023vqacl,wu2025avqacl}.

\section{Additional Experiments}
\label{app:additional_experiments}

\subsection{Computational Efficiency and Scalability}
\label{app:efficiency_scalability}
\begin{figure}[htbp]
    \centering
    \includegraphics[width=0.85\textwidth]{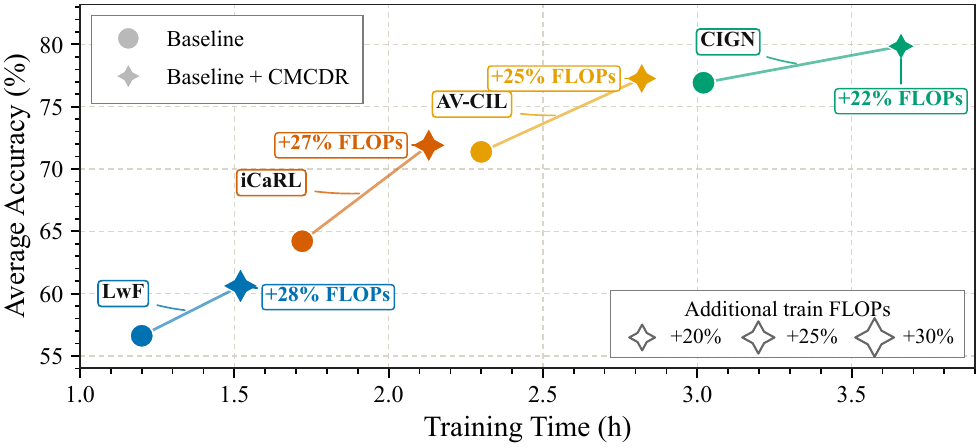}
    \caption{Training-time--accuracy trade-off on AVE. Each colored line links
    one baseline (fixed-size circle) to its CMCDR variant (star). Star size and
    the paired annotation encode the percentage increase in training FLOPs over
    the corresponding baseline. Average accuracy matches
    Table~\ref{tab:class_incremental_results}; training times and FLOPs are
    measured using the final implementations under the protocol below.}
    \label{fig:efficiency_tradeoff}
\end{figure}

We evaluate the additional computational cost of CMCDR using two standard
efficiency metrics: training FLOPs per sample and wall-clock training time per
incremental stage. We compare each
baseline with its CMCDR variant under identical architectures, batch sizes,
training schedules, replay budgets, and hardware. Training time is reported as
mean $\pm$ standard deviation over three runs.

We further measure the cost of adding one modality on UESTC-MMEA-CL. The
two-modality setting uses video and acceleration, while the three-modality
setting additionally includes gyroscope signals. The held-out modality is
removed from both training and evaluation. Exact contribution-profile
construction evaluates $2^M$ modality subsets; therefore, adding the third
modality increases the number of subset responses from $2^2=4$ to $2^3=8$.

\begin{table}[htbp]
\centering
\small
\setlength{\tabcolsep}{6pt}
\caption{Computational efficiency and modality scalability. The percentage in
parentheses reports the CMCDR training-time overhead relative to the matched
baseline; for the three-modality baseline, it reports the cost of adding the
gyroscope modality relative to the two-modality baseline.}
\label{tab:cmcdr_efficiency}
\resizebox{\textwidth}{!}{%
\begin{tabular}{llcc}
\toprule
Modalities & Method & Train FLOPs / sample (G) $\downarrow$ & Train time / stage (min) $\downarrow$ \\
\midrule
Video + acceleration & Baseline & 41.8 & 46.8 \\
\rowcolor{cmcdrgreen}
Video + acceleration & Baseline + CMCDR & 53.5 & 59.9 (28.0\%) \\
Video + acceleration + gyroscope & Baseline & 51.7 & 57.6 (23.1\%) \\
\rowcolor{cmcdrgreen}
Video + acceleration + gyroscope & Baseline + CMCDR & 74.1 & 80.4 (39.6\%) \\
\bottomrule
\end{tabular}}
\end{table}

\subsection{Ablation Studies}
\label{app:contribution_ablation}

\noindent\textbf{Component ablation.}
Table~\ref{tab:contribution_profile_ablation} ablates the two loss components
of CMCDR while using the full M\"obius profile throughout. In
Eq.~\ref{eq:cmcdr_profile_distance},
$\mathcal L_{\mathrm{abs}}$ is the first term that preserves the original
contribution values, and $\mathcal L_{\mathrm{rel}}$ is the second term that
preserves their relative proportions. All variants use the same training
configuration.

\begin{table}[htbp]
\centering
\scriptsize
\setlength{\tabcolsep}{4pt}
\caption{Ablation of the CMCDR loss components. $\mathcal L_{\mathrm{abs}}$ and
$\mathcal L_{\mathrm{rel}}$ denote raw- and normalized-profile matching.
All CMCDR variants use the full M\"obius contribution profile.
Higher Avg. Acc. and lower Avg. Forget. are better.}
\label{tab:contribution_profile_ablation}
\resizebox{\textwidth}{!}{%
\begin{tabular}{lcc|cc|cc}
\toprule
& \multicolumn{2}{c|}{\textbf{Components}} & \multicolumn{2}{c|}{\textbf{AVE}} & \multicolumn{2}{c}{\textbf{UESTC-MMEA-CL}} \\
\textbf{Protocol} & $\mathcal L_{\mathrm{abs}}$ & $\mathcal L_{\mathrm{rel}}$ &
\textbf{Avg. Acc. $\uparrow$} & \textbf{Avg. Forget. $\downarrow$} &
\textbf{Avg. Acc. $\uparrow$} & \textbf{Avg. Forget. $\downarrow$} \\
\midrule
Replay (iCaRL) & $\times$ & $\times$ & 64.20 & 25.60 & 71.80 & 22.90 \\
\rowcolor{cmcdrgreen}
Replay (iCaRL) & $\checkmark$ & $\times$ & 69.27 & 20.38 & 74.08 & 19.16 \\

Replay (iCaRL) & $\times$ & $\checkmark$ & 68.11 & 21.57 & 73.49 & 20.04 \\
\rowcolor{cmcdrgreen}Replay (iCaRL) & $\checkmark$ & $\checkmark$ & 71.90 & 17.40 & 75.20 & 17.20 \\
\midrule
Replay-free (LwF) & $\times$ & $\times$ & 56.61 & 35.11 & 17.10 & 49.40 \\
\rowcolor{cmcdrgreen}
Replay-free (LwF) & $\checkmark$ & $\times$ & 59.46 & 29.18 & 21.37 & 42.08 \\

Replay-free (LwF) & $\times$ & $\checkmark$ & 58.91 & 30.43 & 20.84 & 43.19 \\
\rowcolor{cmcdrgreen} Replay-free (LwF) & $\checkmark$ & $\checkmark$ & 60.61 & 26.35 & 22.80 & 38.70 \\
\bottomrule
\end{tabular}}
\end{table}

\medskip
\noindent\textbf{Controlled comparison with subset-wise distillation.}
This experiment tests whether CMCDR benefits merely from additional
modality-subset evaluations and teacher supervision. Starting from the same
iCaRL objective, we construct a compute-matched subset-wise distillation
baseline. For stage $s\in\{t-1,t\}$, define the old-class distribution under
subset $\mathcal S$ as
\begin{equation}
\mathbf p_{i,s}^{<t}(\mathcal S;T)
=
\operatorname{softmax}
\left(
\frac{
\left[z_{i,k,s}(\mathcal S)\right]_{k\in\mathcal Y_{<t}}
}{T}
\right).
\label{eq:subset_kd_distribution}
\end{equation}
The additional replay-based subset-distillation loss is
\begin{equation}
\mathcal L_{\mathrm{subKD}}^{\mathrm{rep}}
=
\frac{T^2}{|\mathcal X_{\mathrm{old}}|\,2^M}
\sum_{\mathbf x_i\in\mathcal X_{\mathrm{old}}}
\sum_{\mathcal S\subseteq\mathcal M}
\operatorname{KL}
\left(
\operatorname{sg}\!\left[
\mathbf p_{i,t-1}^{<t}(\mathcal S;T)
\right]
\middle\|
\mathbf p_{i,t}^{<t}(\mathcal S;T)
\right).
\label{eq:subset_kd_replay}
\end{equation}
For the replay-free setting, we replace the old exemplars with the same
current-task probes used by replay-free CMCDR. The resulting objective is
\begin{equation}
\mathcal L_{\mathrm{subKD}}^{\mathrm{free}}
=
\frac{T^2}{|\mathcal X_{\mathrm{new}}|\,2^M}
\sum_{\mathbf x_i\in\mathcal X_{\mathrm{new}}}
\sum_{\mathcal S\subseteq\mathcal M}
\operatorname{KL}
\left(
\operatorname{sg}\!\left[
\mathbf p_{i,t-1}^{<t}(\mathcal S;T)
\right]
\middle\|
\mathbf p_{i,t}^{<t}(\mathcal S;T)
\right).
\label{eq:subset_kd_replay_free}
\end{equation}
This variant stores neither old samples nor old labels. It queries the same
frozen previous model on the same current-task probes and modality subsets as
replay-free CMCDR, but preserves the complete old-class distribution under
each intervention. Separate per-class weights are unnecessary because the KL
target is already the teacher distribution over the complete old-class output
space.

SubsetKD performs no contribution decomposition: it independently preserves
the complete old-class distribution under every intervention. Exact SubsetKD
therefore implies preservation of the corresponding M\"obius profile. CMCDR
instead matches centered target-vs-rest contributions in M\"obius coordinates,
directly weighting singleton and interaction drift while permitting
subset-invariant response changes. Thus, SubsetKD is a stronger
function-preservation constraint, whereas CMCDR is a selective structural
constraint aligned with MCD. Within each protocol, the original iCaRL or LwF
objective is unchanged in all rows. SubsetKD and CMCDR use the same replay
samples or current-task probes, all $2^M$ subsets, identical current and frozen
forward passes, and the same coefficient-search budget. Their comparison
isolates contribution-aligned regularization from extra computation and
subset-level teacher supervision in both replay-based and replay-free settings.

\begin{table}[htbp]
\centering
\scriptsize
\setlength{\tabcolsep}{3.8pt}
\caption{Controlled comparison of the replay-based and replay-free baselines,
compute-matched subset-wise logit distillation, and CMCDR. Within each
protocol, the two augmented variants evaluate the same $2^M$ modality subsets;
they differ only in whether they preserve the complete subset-wise output
function or its contribution structure.}
\label{tab:subset_kd_control}
\resizebox{\textwidth}{!}{%
\begin{tabular}{lllccc|cc}
\toprule
\textbf{Protocol} & \textbf{Method} & \textbf{Preservation target} &
\textbf{\# Subsets} & \multicolumn{2}{c|}{\textbf{AVE}} &
\multicolumn{2}{c}{\textbf{UESTC-MMEA-CL}} \\
\cmidrule(lr){5-6}\cmidrule(lr){7-8}
& & & &
\textbf{Avg. Acc. $\uparrow$} & \textbf{Avg. Forget. $\downarrow$} &
\textbf{Avg. Acc. $\uparrow$} & \textbf{Avg. Forget. $\downarrow$} \\
\midrule
Replay & iCaRL & Full-input logits (native) & $1$ & 64.20 & 25.60 & 71.80 & 22.90 \\
Replay & iCaRL + SubsetKD & Subset-wise distributions & $2^M$ & 68.74 & 20.63 & 73.46 & 19.61 \\
\rowcolor{cmcdrgreen}
Replay & iCaRL + CMCDR & M\"obius contribution profile & $2^M$ & 71.90 & 17.40 & 75.20 & 17.20 \\
\midrule
Replay-free & LwF & Full-input distillation (native) & $1$ & 56.61 & 35.11 & 17.10 & 49.40 \\
Replay-free & LwF + SubsetKD & Subset-wise distributions & $2^M$ & 58.89 & 30.78 & 20.86 & 43.57 \\
\rowcolor{cmcdrgreen}
Replay-free & LwF + CMCDR & M\"obius contribution profile & $2^M$ & 60.61 & 26.35 & 22.80 & 38.70 \\
\bottomrule
\end{tabular}}
\end{table}

\medskip
\noindent\textbf{M\"obius-profile parameterization ablation.}
We next isolate the effect of explicitly decomposing subset responses into
M\"obius interaction coordinates. For each old-class index $k$, define the
centered subset-response vector
\begin{equation}
\overline{\mathbf B}_{i,s}^{k}
=
\left[
B_{i,s}^{k,\mathcal Y_{<t}}(\mathcal S)
-
B_{i,s}^{k,\mathcal Y_{<t}}(\emptyset)
\right]_{\emptyset\neq\mathcal S\subseteq\mathcal M}.
\label{eq:centered_subset_profile_ablation}
\end{equation}
The full centered-response vector and the full M\"obius profile contain the
same information because they are related by an invertible linear transform.
They differ only in whether interaction effects are implicit in subset
responses or explicit profile coordinates. We compare three internal
parameterizations: (i) a singleton-only profile that discards all coordinates
with $|\mathcal T|\geq2$, (ii) direct matching of
$\overline{\mathbf B}_{i,s}^{k}$, and (iii) the full M\"obius profile used by
CMCDR. All variants evaluate the same $2^M$ modality subsets and use both terms
of Eq.~\ref{eq:cmcdr_profile_distance}; only the profile parameterization is
changed. Each variant selects its regularization coefficients from the same
search grid and tuning budget. Comparing centered subsets with singleton-only
tests the value of retaining full subset-level interaction information, whereas
comparing M\"obius with centered subsets tests whether explicit interaction
coordinates provide a useful regularization bias beyond an
information-equivalent subset representation.

\begin{table}[htbp]
\centering
\scriptsize
\setlength{\tabcolsep}{3.8pt}
\caption{Controlled ablation of the M\"obius-profile parameterization.
All variants use identical modality-subset evaluations and both the absolute
and relative profile-matching terms. ``Excluded'' retains singleton
contributions only; ``implicit'' retains interactions through centered subset
responses; and ``explicit'' represents them as separate M\"obius coordinates.
}
\label{tab:mobius_parameterization_ablation}
\resizebox{\textwidth}{!}{%
\begin{tabular}{lllcc|cc}
\toprule
\textbf{Protocol} & \textbf{Profile parameterization} &
\textbf{Interaction treatment} &
\multicolumn{2}{c|}{\textbf{AVE}} &
\multicolumn{2}{c}{\textbf{UESTC-MMEA-CL}} \\
\cmidrule(lr){4-5}\cmidrule(lr){6-7}
& & &
\textbf{Avg. Acc. $\uparrow$} & \textbf{Avg. Forget. $\downarrow$} &
\textbf{Avg. Acc. $\uparrow$} & \textbf{Avg. Forget. $\downarrow$} \\
\midrule
Replay (iCaRL) & Singleton-only & Excluded & 68.52 & 21.36 & 73.41 & 20.02 \\

Replay (iCaRL) & Centered subsets & Implicit & 70.87 & 18.69 & 74.63 & 18.34 \\
Replay (iCaRL) & M\"obius & Explicit & 71.90 & 17.40 & 75.20 & 17.20 \\
\midrule
Replay-free (LwF) & Singleton-only & Excluded & 58.74 & 31.02 & 20.11 & 44.27 \\
\rowcolor{cmcdrgreen}
Replay-free (LwF) & Centered subsets & Implicit & 59.86 & 28.41 & 21.69 & 40.66 \\
Replay-free (LwF) & M\"obius & Explicit & 60.61 & 26.35 & 22.80 & 38.70 \\
\bottomrule
\end{tabular}}
\end{table}

\medskip
\noindent\textbf{Contribution-estimator comparison.}
This experiment compares the interaction-resolving profile with alternative
modality-wise contribution estimators. We keep the backbone, task order, memory
budget, optimizer, modality-removal realization, and $d_{\mathrm{cp}}$ fixed,
and replace only the contribution estimator. Each estimator selects its
regularization coefficients from the same search grid and tuning budget. Let
$v_i(\mathcal S)=B_i^{a,\mathcal A}(\mathcal S)$.
For modality $m$, we compare the baseline-centered CMoB-style score
\citep{wang2025cmob} and the Shapley value
\citep{wei2024samplevaluation}:
\begin{align}
q^{\mathrm{CMoB}}_{i,m}
&=v_i(\mathcal M)-v_i(\mathcal M\setminus\{m\})
  +v_i(\{m\})-v_i(\emptyset),
\label{eq:cmob_estimator_ablation}\\
q^{\mathrm{Shap}}_{i,m}
&=\sum_{\mathcal S\subseteq\mathcal M\setminus\{m\}}
\frac{|\mathcal S|!(M-|\mathcal S|-1)!}{M!}
\left[v_i(\mathcal S\cup\{m\})-v_i(\mathcal S)\right].
\label{eq:shapley_estimator_ablation}
\end{align}
The vector $[q_{i,m}]_{m\in\mathcal M}$ replaces the M\"obius profile in
Eq.~\ref{eq:cmcdr_profile_distance}. CMoB-style valuation combines a
full-context removal effect with a singleton effect. Shapley valuation averages
over all coalition contexts, but returns modality-wise values; neither retains
interaction effects as separate profile coordinates.

\begin{table}[htbp]
\centering
\scriptsize
\setlength{\tabcolsep}{3.6pt}
\caption{Controlled comparison of contribution estimators within CMCDR.
``Int.'' indicates whether interaction effects are retained as separate profile
coordinates. All other components and tuning budgets are controlled.}
\label{tab:contribution_estimator_ablation}
\resizebox{\textwidth}{!}{%
\begin{tabular}{llccccc}
\toprule
\textbf{Protocol} & \textbf{Estimator} & \textbf{Int.} &
\multicolumn{2}{c}{\textbf{AVE}} &
\multicolumn{2}{c}{\textbf{UESTC-MMEA-CL}} \\
\cmidrule(lr){4-5}\cmidrule(lr){6-7}
& & &
\textbf{Avg. Acc. $\uparrow$} & \textbf{Avg. Forget. $\downarrow$} &
\textbf{Avg. Acc. $\uparrow$} & \textbf{Avg. Forget. $\downarrow$} \\
\midrule
Replay-free (LwF) & CMoB-style & $\times$ & 58.49 & 31.47 & 19.68 & 45.12 \\
\rowcolor{cmcdrgreen}
Replay-free (LwF) & Shapley & $\times$ & 59.36 & 29.71 & 20.91 & 42.54 \\
Replay-free (LwF) & ours & $\checkmark$ & 60.61 & 26.35 & 22.80 & 38.70 \\
\midrule
Replay (iCaRL) & CMoB-style & $\times$ & 67.71 & 22.06 & 73.12 & 20.31 \\
\rowcolor{cmcdrgreen}
Replay (iCaRL) & Shapley & $\times$ & 69.18 & 20.17 & 74.06 & 18.88 \\
Replay (iCaRL) & ours & $\checkmark$ & 71.90 & 17.40 & 75.20 & 17.20 \\
\bottomrule
\end{tabular}}
\end{table}

\section{Comparison and Uniqueness Discussion}
\label{app:comparison_distinctiveness}

In this section, we first review two lines of work most relevant to CMC
multimodal continual learning and balanced multimodal learning. We then clarify
how their objectives differ from preserving old-task modality contribution
structures. Finally, we empirically study the integration of representative
modality-balancing methods into an MMCL pipeline and compare it with CMCDR.

\noindent\textbf{Multimodal Continual Learning}
\label{app:mmcl_relationship}
Multimodal continual learning (MMCL) learns sequential tasks from multiple
modalities while retaining knowledge from previous tasks. The existing methods 
often focus on stabilizing multimodal representations to mitigate forgetting.
 AV-CIL~\citep{pian2023avcil} preserves instance- and class-level
audio-visual semantic similarity and distills audio-guided visual attention
. CIGN~\citep{mo2023cign} distills audio-visual class tokens to retain
class-aware cross-modal representations. MAFED~\citep{nikandrou2024mafed} applies
modality-aware distillation to visual and question representations
. MG-CLIP~\citep{huang2025mindgap} preserves the image--text modality gap
in CLIP-based continual learning. These methods
regularize feature similarity, token representations, or cross-modal geometry.
CMCDR instead preserves the contribution of individual
modalities and their interactions across incremental stages.

\noindent\textbf{Balanced Multimodal Learning}
\label{app:balanced_multimodal_relationship}
Balanced multimodal learning aims to prevent dominant modalities from
suppressing weaker ones during conventional joint
training~\citep{peng2022ogm,fan2023pmr,li2023agm,wei2024mmpareto,hua2024reconboost}. 
For example, Shapley-based modality valuation estimates each modality's contribution at the
sample level and uses the estimated gap to strengthen low-contributing
modalities~\citep{wei2024samplevaluation}. CMoB uses causal effects to track
sample-level changes in modality contribution and selectively enhances
under-optimized modalities during training~\citep{wang2025cmob}.

Multimodal continual learning presents a different problem because tasks can
rely on different information. As shown in Fig.~\ref{motivation}(a), road
crossing relies mainly on visual cues, while question answering requires both
visual and linguistic cues. Figures~\ref{motivation}(b,c) further show that
modality contributions vary across incremental tasks and datasets. A small
contribution does not always indicate that a modality is poorly learned. The
modality can be less useful for one task but essential to another. 

CMCDR therefore does not force all modalities to contribute equally. It
preserves how earlier tasks use individual modalities and their combinations
while the model learns new tasks. Multimodal balancing methods improve how
modalities are learned at the current stage. CMCDR instead prevents new tasks
from changing the modality dependence of earlier tasks in multimodal
continual learning.

\begin{table}[htbp]
\centering
\scriptsize
\setlength{\tabcolsep}{4pt}
\caption{Controlled comparison under replay-free and replay-based protocols.}
\label{tab:modality_balancing_comparison}
\resizebox{\textwidth}{!}{%
\begin{tabular}{lcccccccc}
\toprule
\textbf{Method} &
\multicolumn{4}{c}{\textbf{AVE}} &
\multicolumn{4}{c}{\textbf{Kinetics-Sounds}} \\
\cmidrule(lr){2-5}\cmidrule(lr){6-9}
& \textbf{\#Mem} & \textbf{Avg. Acc. $\uparrow$} & \textbf{Avg. Forget. $\downarrow$} & \textbf{MCD $\downarrow$}
& \textbf{\#Mem} & \textbf{Avg. Acc. $\uparrow$} & \textbf{Avg. Forget. $\downarrow$} & \textbf{MCD $\downarrow$} \\
\midrule
LwF & 0 & 56.61 & 35.11 & 0.842 & 0 & 65.54 & 16.55 & 0.713 \\
\rowcolor{cmcdrgreen}
LwF + OGM-GE & 0 & 57.48 & 33.62 & 0.756 & 0 & 66.31 & 15.72 & 0.651 \\
LwF + CMoB & 0 & 58.23 & 32.21 & 0.684 & 0 & 66.88 & 14.91 & 0.593 \\
\rowcolor{cmcdrgreen}
\textbf{LwF + CMCDR} & 0 & \textbf{60.61} & \textbf{26.35} & \textbf{0.401} & 0 & \textbf{69.12} & \textbf{12.64} & \textbf{0.351} \\
\midrule
iCaRL & 340 & 64.20 & 25.60 & 0.779 & 500 & 65.54 & 40.57 & 0.826 \\
\rowcolor{cmcdrgreen}
iCaRL + OGM-GE & 340 & 65.84 & 23.91 & 0.704 & 500 & 66.41 & 38.82 & 0.752 \\
iCaRL + CMoB & 340 & 66.57 & 22.54 & 0.641 & 500 & 67.12 & 37.45 & 0.694 \\
\rowcolor{cmcdrgreen}
\textbf{iCaRL + CMCDR} & 340 & \textbf{71.90} & \textbf{17.40} & \textbf{0.389} & 500 & \textbf{70.86} & \textbf{31.24} & \textbf{0.418} \\
\bottomrule
\end{tabular}}
\end{table}

\noindent\textbf{Controlled Comparison with Modality-Balancing Methods}
\label{app:modality_balancing_comparison}
We distinguish replay-free and replay-based protocols according to exemplar
availability. In both protocols, we evaluate OGM-GE~\citep{peng2022ogm}, which
modulates modality-specific gradients, and CMoB~\citep{wang2025cmob}, which
estimates sample-level modality values through causal effects and selectively
enhances under-optimized modalities. The replay-free protocol uses
LwF~\citep{li2017lwf} without storing old-task samples. OGM-GE and CMoB act on
the current-task multimodal objective while leaving the original LwF
distillation term unchanged;
replay-free CMCDR uses current-task probes and the frozen previous model. The
replay-based protocol applies the same methods to the iCaRL training pipeline.
Within each protocol,
we keep the backbone, classifier, task order, data augmentation, optimizer, and
number of updates fixed. CMCDR uses the same contribution-profile construction
and coefficient as in the main experiments. We report average accuracy, average
forgetting, and MCD on AVE and Kinetics-Sounds. This comparison tests
whether methods designed to balance current-stage multimodal optimization also
preserve old-task modality contributions during continual learning.

\end{document}